\documentclass{article}

\usepackage[preprint]{neurips_2026}

\usepackage[utf8]{inputenc}
\usepackage[T1]{fontenc}
\usepackage{hyperref}
\usepackage{url}
\usepackage{booktabs}
\usepackage{amsfonts}
\usepackage{amsmath}
\usepackage{amssymb}
\usepackage{amsthm}
\usepackage{nicefrac}
\usepackage{microtype}
\usepackage{xcolor}
\usepackage{graphicx}
\usepackage{multirow}
\usepackage{subcaption}
\usepackage{algorithm}
\usepackage{algorithmic}
\usepackage{float}
\usepackage{pifont}

\graphicspath{{figures/}}
\newtheorem{proposition}{Proposition}

\title{SPLICE: Latent Diffusion over JEPA Embeddings\\
for Conformal Time-Series Inpainting}

\author{%
  Arnaud Zinflou \\
  Hydro-Qu\'{e}bec Research Institute \\
  Canada \\
  \texttt{zinflou.arnaud@hydroquebec.com}
}

\begin{document}

\maketitle

\begin{abstract}
Generative models for time-series imputation achieve strong
reconstruction accuracy, yet provide no finite-sample reliability
guarantees, a critical limitation in power systems where imputed
values inform dispatch and planning.
We introduce \textbf{SPLICE} (Self-supervised Predictive Latent
Inpainting with Conformal Envelopes), a modular framework coupling
latent generative imputation with distribution-free, online-adaptive
prediction intervals.
A JEPA encoder maps daily load segments into a 64-dimensional latent
space; a conditional latent bridge with four sampling modes (\S\ref{sec:method})
generates candidate gap trajectories; an hourly-conditioned decoder
maps back to signal space; and Adaptive Conformal Inference (ACI)
wraps the output with coverage-guaranteed prediction bands.
The flow-matching variant achieves comparable quality to DDIM in
5--10~ODE steps ($5{-}10\times$ speedup).
On thirteen load datasets (nine proprietary, three UCI Electricity,
ETTh1), SPLICE achieves the lowest mean Load-only MSE (0.056),
winning 9/12 non-degenerate datasets at 91-day gaps and
18/32 across all gap lengths vs.\ five established baselines, and
produces the best CRPS (0.161, $-18.3\%$ vs.\ the strongest
competitor).  ACI delivers 93--95\% empirical coverage, correcting
under-coverage failures of up to 7.5~pp observed with static
conformal prediction.
A pooled JEPA encoder trained on nine feeds transfers to four unseen
domains, matching or exceeding per-dataset oracles with only a quick bridge
fine-tuning.
\end{abstract}

\section{Introduction}
\label{sec:introduction}

The reliable operation of power grids depends on accurate
reconstruction and prediction of short-term electricity
demand using historical consumption data alongside external factors such as
weather and calendar effects. Real-world metering systems, however,
frequently encounter extended intervals of missing data due to
sensor failures, maintenance outages, or communication errors.
Traditional imputation methods revert to smooth interpolations that
fail to preserve the fine-grained temporal structure required for
operational planning, while recent deep generative
approaches~\citep{tashiro2021csdi, alcaraz2023sssd} produce
plausible reconstructions but offer no formal guarantees on the
reliability of their predictions, a critical gap for
safety-sensitive infrastructure.

We address this limitation with SPLICE, a multi-stage generative
framework that couples high-fidelity latent imputation with
distribution-free, online-adaptive prediction intervals.
Conceptually, our approach instantiates LeCun's
\emph{JEPA-based world model} vision~\citep{lecun2022path}: a
Joint Embedding Predictive Architecture learns a latent dynamics
model of the load system, enabling the pipeline to ``imagine''
plausible trajectories for unobserved intervals rather than merely
interpolating between observed endpoints.  Unlike earlier world
models designed for RL planning~\citep{ha2018world}, our system is
purpose-built for conditional generation with uncertainty
quantification, a distinction that shapes every architectural
choice.

The pipeline comprises four independently replaceable modules
(Section~\ref{sec:method}): a JEPA
encoder~\citep{assran2023jepa} that maps daily segments into a
structured 64-dim latent space; a conditional bridge that generates
gap trajectories with tuneable stochasticity; a flow-matching
sampler~\citep{lipman2023flow} achieving comparable quality to DDIM
in 5--10 Euler steps; and an hourly-conditioned decoder.

To provide rigorous uncertainty quantification, we wrap the full
generative pipeline with Adaptive Conformal Inference~(ACI)
\citep{gibbs2021aci}, which adjusts its miscoverage level at each
time step, provably maintaining long-run coverage even under
non-stationary distribution shift.
ACI self-corrects online, recovering valid coverage
without manual recalibration.

We evaluate on thirteen load datasets: nine proprietary utility
feeds, three UCI Electricity series, and
ETTh1~\citep{zhou2021informer}, spanning residential, commercial,
and industrial profiles.  SPLICE achieves the lowest mean Load-only
MSE (0.056, averaged over 3 seeds), while ACI delivers near-nominal coverage without
manual recalibration.  Downstream forecasters trained on imputed
series confirm that the latent representations transfer effectively
to predictive tasks.

\textbf{Contributions.} Our main contributions are:
\begin{itemize}
  \item A modular latent generative pipeline: JEPA encoder,
    conditional bridge (deterministic/noise-perturbed/DDIM/flow-matching),
    and hourly-conditioned decoder, achieving a Load-only MSE of
    0.056 (mean over 3 seeds, excl.\ degenerate feed) across thirteen datasets, with
    flow matching providing $5{-}10\times$ speedup over DDIM.
  \item Distribution-free uncertainty via ACI, correcting
    under-coverage by up to 7.5~pp and achieving 93--95\%
    empirical coverage.  Noise-perturbed latent ensembles yield
    the lowest CRPS (0.161, $-18.3\%$ relative to the strongest baseline).
  \item Comprehensive evaluation across thirteen datasets including
    gap-length sensitivity, decoder ablations, and generative
    backend comparisons
    (Appendices~\ref{sec:diffusion_ablation}--\ref{sec:incremental_ablation}),
    establishing 18/32 total wins (56\%) across all gap lengths.
  \item A preliminary transfer study showing that a pooled JEPA
    encoder generalises to four unseen domains, matching or exceeding
    per-dataset oracles with minimal fine-tuning
    (Section~\ref{sec:transfer}).
\end{itemize}

\section{Related Work}
\label{sec:related}

\textbf{Latent dynamics and self-supervised representations.}
LeCun~\citep{lecun2022path} proposes JEPA as the backbone for world models that learn predictive latent dynamics; early instantiations target RL planning~\citep{ha2018world, hafner2020dream}, whereas we repurpose the same principle for conditional gap inpainting with uncertainty quantification.  Self-supervised time-series representations are commonly built on masked modelling, as in PatchTST~\citep{nie2023patchtst}, or contrastive learning over augmented views, as in TS2Vec~\citep{yue2022ts2vec}. In parallel, joint-embedding architectures and related methods (JEPA~\citep{lecun2022path}, I-JEPA~\citep{assran2023jepa}, V-JEPA~\citep{bardes2024vjepa}, BYOL~\citep{grill2020byol}, SimSiam~\citep{chen2021simsiam}, VICReg~\citep{bardes2022vicreg}, TS-JEPA~\citep{ennadir2025tsjepa}) produce compact embeddings but do not generate long, high-fidelity trajectories; their lossy latent predictions sacrifice the temporal detail needed for faithful gap reconstruction. We address this by pairing a JEPA encoder with an explicit generative bridge that operates within the learned latent space.
\textbf{Generative imputation and flow matching.}
CSDI~\citep{tashiro2021csdi} and SSSD~\citep{alcaraz2023sssd} apply diffusion in observation space; GP-VAE~\citep{fortuin2020gp} uses Gaussian process priors in a VAE latent space; we operate in a JEPA latent space and add conformal coverage.  Our VAE+Bridge ablation (Table~\ref{tab:mse}) provides a controlled comparison to VAE-based latent imputation; the 25.1\% MSE reduction confirms the advantage of the JEPA embedding geometry over a VAE bottleneck for long-horizon gap-filling.  Flow matching~\citep{lipman2023flow} and rectified flows~\citep{liu2023rectified} learn straight ODE paths integrable in 5--10 steps, which we adopt as a drop-in replacement for DDIM.
\textbf{Conformal prediction.}
CQR~\citep{romano2019cqr} provides distribution-free intervals under exchangeability~\citep{vovk2005algorithmic}, but its static calibration leads to under-coverage in non-stationary load data.  ACI~\citep{gibbs2021aci, zaffran2022aci} updates $\alpha_t$ online, maintaining long-run coverage under distributional shift.  To our knowledge, SPLICE is the first work to wrap a deep generative imputation pipeline with ACI (Table~\ref{tab:method_comparison}, appendix).

\section{Problem Formulation}
\label{sec:problem}

We consider a multivariate time series
$\mathbf{x} = \{x_1, \dots, x_T\}$ observed at hourly resolution,
where each $x_t \in \mathbb{R}^d$ contains the load measurement and
associated covariates (temperature, humidex, wind speed, weather code,
and calendar features). A contiguous gap interval
$[t_a, t_b] \subset [1, T]$ is unobserved, and we define:
\begin{equation}
  x_{\mathrm{obs}} = \{x_t : t \notin [t_a, t_b]\}, \quad
  x_{\mathrm{gap}} = \{x_t : t \in [t_a, t_b]\}.
  \label{eq:gap}
\end{equation}

The goal is to produce (i)~imputed trajectories
$\hat{x}_{\mathrm{gap}}$ that faithfully reconstruct the temporal
structure of the missing interval, and (ii)~prediction intervals
$[l_t, u_t]$ for each $t \in [t_a, t_b]$ satisfying a long-run
coverage guarantee:
\begin{equation}
  \lim_{T \to \infty} \frac{1}{T} \sum_{t=1}^{T}
  \mathbf{1}\{x_t \in [l_t, u_t]\} \geq 1 - \alpha,
  \label{eq:coverage}
\end{equation}
where $\alpha \in (0,1)$ is the target miscoverage rate (we use
$\alpha = 0.05$ throughout). This guarantee must hold without
distributional assumptions, in particular without exchangeability,
which is routinely violated in load time series due to seasonal drift,
weather regime changes, and demand growth.

We decompose the problem into three stages:
\emph{representation} (a JEPA encoder maps daily segments into latent
embeddings $z_t = f_\theta(x_t) \in \mathbb{R}^p$),
\emph{generation} (a conditional latent bridge produces candidate
trajectories
$\hat{z}_{\mathrm{gap}} \sim G_\phi(\cdot \mid z_{\mathrm{obs}})$),
and \emph{calibration} (ACI wraps decoder output with adaptive
prediction intervals).  Each stage is detailed in
Section~\ref{sec:method}.

\section{Method}
\label{sec:method}

\begin{figure}[t]
  \centering
  \includegraphics[width=\linewidth]{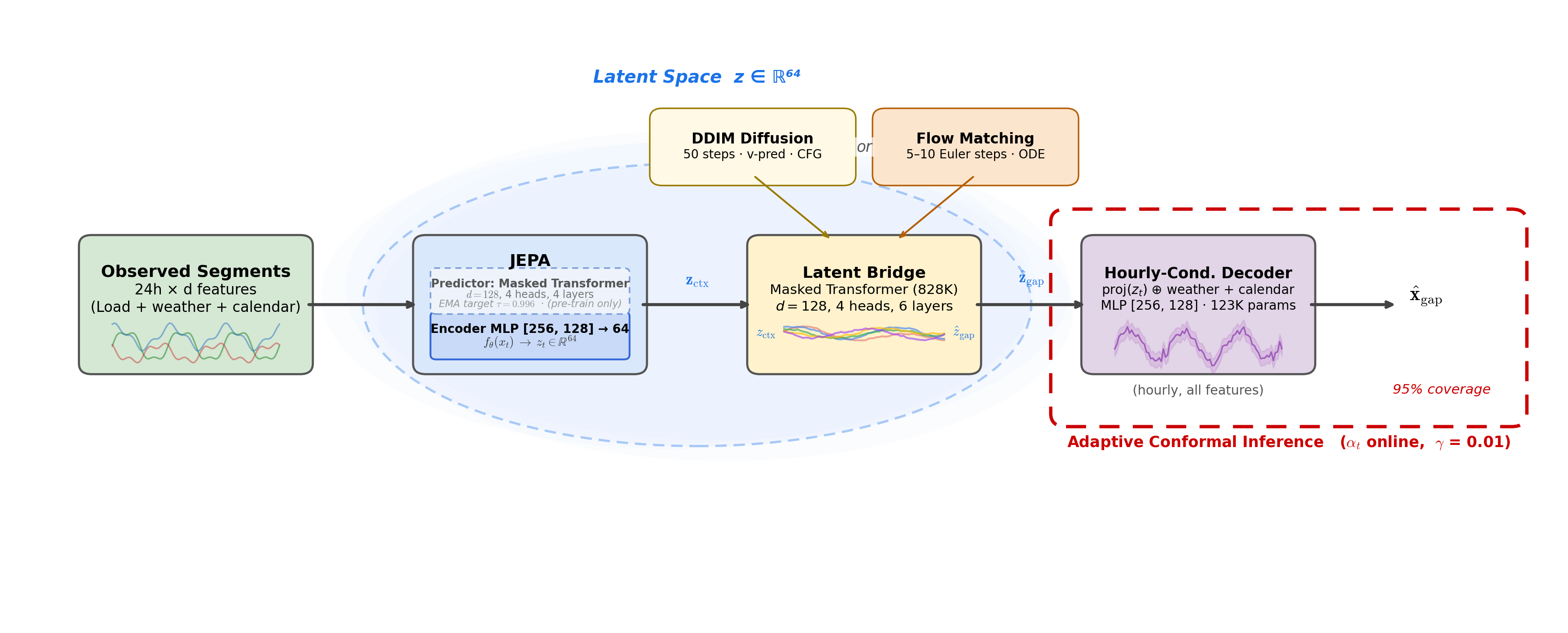}
  \caption{Overview of the SPLICE pipeline. Multivariate daily
  segments (load, temperature, wind, solar angle, and other
  covariates) are encoded by JEPA into 64-dimensional latent
  embeddings. A conditional latent bridge (four modes; see \S\ref{sec:method})
  generates gap trajectories.
  A weather- and calendar-conditioned MLP decoder maps back to hourly
  signal space. ACI wraps the output with adaptive prediction
  intervals.}
  \label{fig:pipeline}
\end{figure}

Figure~\ref{fig:pipeline} illustrates the full architecture.
We describe each component below.

\subsection{JEPA Encoder: Latent State Inference}
\label{sec:jepa}

We adopt a Joint Embedding Predictive Architecture~(JEPA) as the
representation backbone. Each daily load segment
$x_t \in \mathbb{R}^{24 \times d}$ (24~hours $\times$ $d$~features) is
encoded into a compact representation $z_t \in \mathbb{R}^{64}$ via a
two-layer MLP encoder:
\begin{equation}
  z_t = f_\theta(x_t) = \mathrm{MLP}(x_t;\; [256, 128] \to 64).
  \label{eq:encoder}
\end{equation}

The JEPA predictor is a masked Transformer ($d_{\mathrm{model}}{=}128$,
$n_{\mathrm{heads}}{=}4$, $n_{\mathrm{layers}}{=}4$) that predicts the
latent embedding of masked days from the context of observed days. The
training objective combines:
 \textbf{(1) Prediction loss:} cosine similarity between predicted
    and target embeddings (EMA target encoder, decay${}=0.996$).
  \textbf{(2) Variance regularisation} (weight${}=0.05$): prevents
    embedding collapse by penalising low variance across the
    batch~\citep{bardes2022vicreg}.
  \textbf{(3) Covariance regularisation} (weight${}=0.001$):
    decorrelates embedding dimensions to encourage information
    spreading.
Formally, the full JEPA objective is:
\begin{equation}
  \mathcal{L}_{\mathrm{JEPA}} =
  \underbrace{1 - \frac{\langle \hat{z},\, \bar{z} \rangle}
    {\|\hat{z}\|\,\|\bar{z}\|}}_{\text{cosine prediction}}
  \;+\; \lambda_v \underbrace{\sum_{j=1}^{p}
    \max\!\bigl(0,\, 1 - \sqrt{\mathrm{Var}(z_j) + \varepsilon}\,\bigr)}_{\text{variance hinge}}
  \;+\; \lambda_c \underbrace{\frac{1}{p^2}
    \!\sum_{i \neq j}\! C_{ij}^2}_{\text{off-diagonal covariance}},
  \label{eq:jepa_loss}
\end{equation}
where $\hat{z}$ is the predictor output, $\bar{z}$ the
exponential-moving-average target, $C = \mathrm{Cov}(\mathbf{z})$
the batch covariance matrix, $\lambda_v = 0.05$,
$\lambda_c = 0.001$, and $\varepsilon = 10^{-4}$.

A separate decoder MLP ($[128, 256] \to d_{\mathrm{input}}$) is trained
to reconstruct $x_t$ from $z_t$, providing the signal-space mapping
needed for evaluation. The full JEPA module has approximately 1.06M
parameters.

\subsection{Conditional Latent Bridge: Trajectory Generation}
\label{sec:diffusion}

The bridge architecture supports a \emph{controllable spectrum of stochasticity},
allowing practitioners to trade point accuracy for distributional richness:
\textbf{(i) Deterministic} ($\sigma{=}0$) — A masked Transformer (828K parameters)
directly predicts gap embeddings from context, optimised for MSE. This variant
produces the lowest point-forecast error and is used in the main comparison
(Section~\ref{sec:ext-baselines}).
\textbf{(ii) Noise-perturbed} ($\sigma{=}0.15$) — Gaussian noise is added to the
deterministic bridge before decoding, yielding $M$ diverse trajectories.
This lightweight mechanism achieves well-calibrated forecasts (lowest CRPS;
Section~\ref{sec:distributional}) and provides conformity scores for ACI
(Section~\ref{sec:aci}).
\textbf{(iii) Generative} (1.36M parameters) — Conditional DDIM diffusion or
flow-matching models generate gap embeddings with maximum trajectory diversity.
Evaluated in Appendix~\ref{sec:diffusion_ablation}.
All three variants share the same Transformer backbone
($d_{\mathrm{model}}{=}128$, $n_{\mathrm{heads}}{=}4$,
$n_{\mathrm{layers}}{=}6$); the generative variants add a
stochastic denoising or flow wrapper.  The structured JEPA
embedding space makes even small perturbations ($\sigma{=}0.15$)
produce diverse yet plausible trajectories, because the learned
geometry concentrates probability mass on physically consistent
states.

\paragraph{Flow matching as the preferred generative backend.}
\label{sec:flow_matching}
Flow matching~\citep{lipman2023flow, liu2023rectified} learns a
velocity field that transports samples along \emph{straight} ODE
paths from noise to data.  At inference, gap embeddings are
generated by Euler integration of the learned field
$v_\theta(z_t, t, c)$ conditioned on context embeddings
$c = z_{\mathrm{obs}}$:
\begin{equation}
  z_{t - \Delta t} = z_t - \Delta t \cdot v_\theta(z_t, t, c),
  \quad \Delta t = 1/N,
  \label{eq:fm_euler}
\end{equation}
where $N{=}5{-}10$ steps suffice thanks to the smooth JEPA latent
space, yielding a $5{-}10\times$ speedup over the 50-step DDIM
sampler at comparable quality.  Classifier-free
guidance~\citep{ho2022cfg} amplifies conditioning by interpolating
between conditional and unconditional velocity predictions, with
per-dataset guidance scale selected on validation
(Appendix~\ref{app:guidance}).  Full diffusion and flow-matching
formulations (forward process, $v$-prediction, training objectives)
are given in Appendix~\ref{app:bridge_details}.

\subsection{Adaptive Conformal Inference}
\label{sec:aci}

Standard conformal prediction assumes exchangeability of calibration
and test data, which is violated in non-stationary time series.
Conformalized Quantile Regression~(CQR)~\citep{romano2019cqr}
computes a calibration quantile $\hat{q}$ from held-out residuals:
\begin{equation}
  \hat{q} = \mathrm{Quantile}_{1-\alpha}\!\bigl(\bigl\{
  \max(\hat{q}_{\mathrm{lo}}^{(i)} - y^{(i)},\;
  y^{(i)} - \hat{q}_{\mathrm{hi}}^{(i)})\bigr\}_{i=1}^n\bigr).
  \label{eq:cqr}
\end{equation}
In our pipeline, the quantile estimates $\hat{q}_{\mathrm{lo}}$
and $\hat{q}_{\mathrm{hi}}$ are constructed from multiple
generative samples rather than parametric quantile regression.
For each calibration window we draw $S{=}50$ independent
trajectories through the full pipeline (bridge $\to$
decoder), yielding $S$ signal-space reconstructions of the 91-day
gap. Pointwise empirical quantiles at levels $\alpha/2$ and
$1 - \alpha/2$ across these $S$ samples define the initial
prediction band; the CQR nonconformity score is
\begin{equation}
  R_t = \max\!\bigl(\hat{q}_{\mathrm{lo},t} - y_t,\;
  y_t - \hat{q}_{\mathrm{hi},t}\bigr),
  \label{eq:nonconformity}
\end{equation}
which is zero when $y_t$ lies inside the band. The collection of
scores $\{R_t\}$ over all calibration windows provides the
empirical distribution from which $\hat{q}$ (Eq.~\ref{eq:cqr})
is computed.

ACI~\citep{gibbs2021aci} makes this adaptive by maintaining a running
miscoverage level $\alpha_t$ that is updated at each time step based
on observed coverage:
\begin{equation}
  \alpha_{t+1} = \alpha_t + \gamma(\alpha - \mathrm{err}_t),
  \quad
  \mathrm{err}_t = \mathbf{1}\{y_t \notin C_t(\alpha_t)\},
  \label{eq:aci}
\end{equation}
where $\gamma > 0$ is the learning rate (we use $\gamma = 0.01$) and
$\alpha_t$ is clamped to $(0.001, 0.999)$. The key theoretical
guarantee is:
\begin{equation}
  \lim_{T \to \infty} \frac{1}{T} \sum_{t=1}^{T} \mathrm{err}_t
  = \alpha.
  \label{eq:aci_guarantee}
\end{equation}
This holds without distributional assumptions; the only requirement
is that the base predictor produces valid quantile estimates on the
calibration set (see Appendix~\ref{app:aci} for the finite-sample
bound). When the pipeline under-covers ($\mathrm{err}_t = 1$ too
often), $\alpha_t$ decreases, widening the bands. When it
over-covers, $\alpha_t$ increases, tightening them.

\subsection{Hourly-Conditioned Decoder and End-to-End Pipeline}
\label{sec:decoder}

A per-hour conditioned MLP decoder maps
latent trajectories $\hat{z}_{\mathrm{gap}}$ back to signal space.
The decoder first projects each 64-dimensional JEPA embedding to a
256-dimensional hidden state, broadcasts it to 24~hours, and
concatenates per-hour weather covariates (temperature, dew point, wind,
humidity) and calendar encodings (sine/cosine hour, day-of-week,
month). A shared two-layer MLP ($[256, 128] \to d_{\mathrm{features}}$)
with LayerNorm and GELU activations then decodes each hour
independently:
\begin{equation}
  \hat{x}_{t,h} = \mathrm{Dec}_\theta(\mathrm{proj}(z_t),\, c_{t,h}),
  \quad z_t \in \mathbb{R}^{64},\;
  c_{t,h} = [\mathrm{weather}_h, \mathrm{calendar}_h].
  \label{eq:decoder}
\end{equation}
This per-hour conditioning closes the informational gap between
latent-space and observation-space methods: the decoder receives the
same weather covariates that external baselines~(BRITS, SAITS) access
directly within the gap.

Training uses three enhancements: (i)~a Load-weighted MSE loss
($5\times$ weight on the Load feature) to prioritise the target
variable; (ii)~noise augmentation ($\sigma = 0.15$ Gaussian noise on
50\% of embedding batches) to improve robustness to bridge
reconstruction errors; and (iii)~cosine learning rate annealing.
The decoder is trained with the JEPA encoder frozen ($\approx$123K parameters).

End-to-end training proceeds in four sequential stages:
(1)~JEPA encoder and predictor pre-training with VICReg-style
regularisation (Section~\ref{sec:jepa});
(2)~hourly-conditioned decoder on frozen JEPA embeddings with noise
augmentation (this section);
(3)~conditional latent bridge on JEPA embeddings, the deterministic
masked Transformer bridge provides point accuracy, while optional
stochastic extensions (DDIM diffusion, flow matching) are trained
with the same backbone for distributional generation
(Section~\ref{sec:diffusion}; Appendix~\ref{app:bridge_details});
(4)~conformal calibration and ACI online adaptation
(Section~\ref{sec:aci}).
Algorithm~\ref{alg:pipeline} (Appendix) summarises the inference
procedure.

\section{Experimental Setup}
\label{sec:setup}

\subsection{Datasets}

We evaluate on thirteen hourly-resolution datasets
(Table~\ref{tab:datasets}, appendix): nine proprietary utility feeds
from the northeastern US (2015--2025), three UCI
Electricity~\citep{uci_electricity} clients (MT\_001/150/320), and
ETTh1~\citep{zhou2021informer}.  All include weather covariates
(known-future); load profiles span five orders of magnitude across
residential, commercial, industrial, lighting, and transformer
monitoring domains.

\subsection{Gap Simulation Protocol}
\label{sec:gap_protocol}

We create contiguous 91-day (2{,}184-hour) gaps in the test period,
representative of real-world meter outages.  Each evaluation window
spans $365+91=456$ days: 365~days of observed context followed by
91~masked days.  A sliding-window procedure yields 20--40 instances
per dataset.  All features are z-score normalised using training-set
statistics only.  Appendix~\ref{app:protocol} provides additional
details; shorter gaps (7/30~days) are evaluated in
Appendix~\ref{sec:gap_length}.

\subsection{Baselines and Ablations}
\label{sec:baselines}

We compare against five external imputation methods spanning
distinct modelling paradigms, plus internal ablation variants.
All receive the same train/test splits, 91-day gap masks, and
14-dimensional input features.
\textbf{External baselines.}
\textbf{Seasonal~Na\"ive}: copies each missing day from the same
week of the previous year.
\textbf{BRITS}~\citep{cao2018brits}: bidirectional RNN imputation
(hidden${}=128$, patience~15).
\textbf{SAITS}~\citep{du2023saits}: diagonally-masked self-attention
($n_{\text{layers}}{=}2$, $d_{\text{model}}{=}128$, 4~heads).
\textbf{CSDI}~\citep{tashiro2021csdi}: conditional score-based
diffusion (4~layers, 50~steps), the only other probabilistic
baseline.
\textbf{TimesNet}~\citep{wu2023timesnet}: Inception-based 2D
convolution over learned period--frequency pairs ($\sim$37M~params).
\textbf{Internal ablation variants.}
\textbf{VAE+Bridge} (legacy): $\beta$-VAE (dim${}=32$) + masked
Transformer bridge, testing lossy VAE bottleneck vs.\ JEPA.
\textbf{JEPA-Only}: encoder + direct MLP decoder (no bridge).
\textbf{JEPA+Bridge}: deterministic bridge only.
\textbf{JEPA+Diffusion}: adds DDIM latent diffusion.
\textbf{JEPA+Flow-Matching}: replaces DDIM with 5-step Euler ODE.
\textbf{Full+CQR}: static conformal (no adaptive $\alpha_t$).
\textbf{Full+ACI}: complete system.

\subsection{Evaluation Metrics}
\label{sec:metrics}

\textbf{Imputation accuracy:}
\emph{All-feature gap-frame MSE} (normalised $[0,1]$) for internal
ablations; \emph{Load-only gap MSE} (min-max $[0,1]$) for external
baselines; MAPE and RMSE/MAE in physical units
(Appendix~\ref{app:physical_units}).
\textbf{Uncertainty calibration:}
Empirical coverage (target 95\%) and mean interval width.
\textbf{Downstream forecasting:}
WMAPE and MAE from DLinear~\citep{zeng2023dlinear} and
TFT~\citep{lim2021tft} trained on filled series
(Appendix~\ref{sec:forecasting}).

\subsection{Implementation Details}

All models are trained on a single NVIDIA RTX A4000 (16\,GB) with
PyTorch~\citep{paszke2019pytorch} and early stopping.
Table~\ref{tab:hyperparams} (appendix) lists architecture and training
hyperparameters.  The full pipeline uses 1.9--3.2M parameters, trains in
$\sim$8--12~min per dataset, and inference for a 91-day gap takes
6.5--9.0\,ms (Table~\ref{tab:compute}, appendix).

\section{Results}
\label{sec:results}

\subsection{Gap-Filling Accuracy}
\label{sec:mse}

Internal ablations (Table~\ref{tab:mse}, Figure~\ref{fig:mse_bars},
appendix) confirm that both the bridge architecture and JEPA
embeddings contribute: the full pipeline reduces MSE by 53.2\%
vs.\ JEPA-only and 25.1\% vs.\ a legacy VAE+Bridge baseline.

\subsection{Comparison with External Imputation Baselines}
\label{sec:ext-baselines}

To contextualise our results against established imputation
methods, we benchmark the enhanced hourly-conditioned decoder
(Section~\ref{sec:decoder}) against five external baselines:
Seasonal~Na\"ive, BRITS~\citep{cao2018brits}, SAITS~\citep{du2023saits},
CSDI~\citep{tashiro2021csdi}, and TimesNet~\citep{wu2023timesnet}.
BRITS, SAITS, CSDI, and TimesNet are trained via the
PyPOTS library~\citep{du2023pypots}, receiving identical 14-feature daily frames, the same
train/test splits, and the same 91-day contiguous gap.
Table~\ref{tab:ext-baselines-mse} reports Load-only MSE on the
min-max $[0,1]$ normalised scale (the standard metric used by all
external baselines), while Table~\ref{tab:ext-baselines-mape}
(appendix) reports MAPE~(\%) on the original scale.

\begin{table}[t]
  \caption{Load-only gap MSE vs.\ external imputation baselines (min-max
           normalised to $[0,1]$ on truth range). Lower is better; bold
           marks the best per dataset.  Values show mean{\scriptsize$\pm$std}
           over 3 random seeds (42, 43, 44); Seasonal Na\"ive is
           deterministic (single run).
           $\dagger$TimesNet diverged on
           UCI\_Elec\_MT\_150 (catastrophic overfitting of 37M-parameter model
           on a small dataset).
           $\ddagger$SEMAResNstar1009 is near-degenerate (MSE rounds to
           zero under min-max normalisation); mean and win counts exclude
           this dataset.}
  \label{tab:ext-baselines-mse}
  \centering
  \scriptsize
  \begin{tabular}{lrrrrrr}
    \toprule
    Dataset & Seasonal & BRITS & SAITS & CSDI & TimesNet & SPLICE \\
    \midrule
    PepcoCOM           & 0.0121 & 0.032{\scriptsize$\pm$.012} & 0.022{\scriptsize$\pm$.004} & 0.111{\scriptsize$\pm$.001} & 0.069{\scriptsize$\pm$.007} & \textbf{0.003}{\scriptsize$\pm$.001} \\
    RICom1013          & 0.0300 & 0.034{\scriptsize$\pm$.017} & 0.029{\scriptsize$\pm$.008} & 0.201{\scriptsize$\pm$.002} & 0.088{\scriptsize$\pm$.006} & \textbf{0.022}{\scriptsize$\pm$.004} \\
    RIInd1014          & \textbf{0.0402} & 0.154{\scriptsize$\pm$.035} & 0.103{\scriptsize$\pm$.012} & 0.200{\scriptsize$\pm$.005} & 0.298{\scriptsize$\pm$.048} & 0.202{\scriptsize$\pm$.022} \\
    RIRes1012          & 0.0880 & 0.036{\scriptsize$\pm$.022} & 0.049{\scriptsize$\pm$.008} & 0.233{\scriptsize$\pm$.001} & 0.098{\scriptsize$\pm$.026} & \textbf{0.035}{\scriptsize$\pm$.006} \\
    SEMAResNstar1009$^\ddagger$ & 0.2486 & 0.068{\scriptsize$\pm$.087} & 0.047{\scriptsize$\pm$.015} & 0.266{\scriptsize$\pm$.002} & 0.420{\scriptsize$\pm$.062} & \textbf{0.000}{\scriptsize$\pm$.000} \\
    UES\_NH\_Med        & 0.4008 & 0.358{\scriptsize$\pm$.164} & 0.390{\scriptsize$\pm$.068} & 0.619{\scriptsize$\pm$.003} & 1.403{\scriptsize$\pm$.419} & \textbf{0.206}{\scriptsize$\pm$.044} \\
    WCMA1010res        & 0.0243 & 0.022{\scriptsize$\pm$.006} & 0.022{\scriptsize$\pm$.002} & 0.087{\scriptsize$\pm$.000} & 0.062{\scriptsize$\pm$.009} & \textbf{0.009}{\scriptsize$\pm$.001} \\
    WCMA1011lig        & 0.1614 & 0.096{\scriptsize$\pm$.017} & \textbf{0.027}{\scriptsize$\pm$.004} & 0.982{\scriptsize$\pm$.023} & 0.191{\scriptsize$\pm$.031} & 0.148{\scriptsize$\pm$.059} \\
    WCMAnatGridRes1004 & 0.0845 & 0.026{\scriptsize$\pm$.006} & 0.019{\scriptsize$\pm$.001} & 0.206{\scriptsize$\pm$.001} & 0.084{\scriptsize$\pm$.010} & \textbf{0.011}{\scriptsize$\pm$.006} \\
    \midrule
    UCI\_Elec\_MT\_001   & --- & 0.002{\scriptsize$\pm$.000} & \textbf{0.002}{\scriptsize$\pm$.001} & 0.041{\scriptsize$\pm$.001} & 0.050{\scriptsize$\pm$.002} & 0.003{\scriptsize$\pm$.002} \\
    UCI\_Elec\_MT\_150   & --- & 0.010{\scriptsize$\pm$.000} & 0.011{\scriptsize$\pm$.001} & 0.142{\scriptsize$\pm$.000} & 6.659$^\dagger${\scriptsize$\pm$1.50} & \textbf{0.005}{\scriptsize$\pm$.000} \\
    UCI\_Elec\_MT\_320   & --- & 0.017{\scriptsize$\pm$.010} & 0.043{\scriptsize$\pm$.008} & 0.122{\scriptsize$\pm$.001} & 0.067{\scriptsize$\pm$.001} & \textbf{0.008}{\scriptsize$\pm$.001} \\
    ETTh1              & --- & 0.025{\scriptsize$\pm$.008} & 0.049{\scriptsize$\pm$.001} & 0.078{\scriptsize$\pm$.000} & 0.056{\scriptsize$\pm$.005} & \textbf{0.022}{\scriptsize$\pm$.002} \\
    \midrule
    Mean$^\ddagger$ & 0.105{\scriptsize$\pm$.043} & 0.068{\scriptsize$\pm$.028} & 0.064{\scriptsize$\pm$.029} & 0.252{\scriptsize$\pm$.076} & 0.760{\scriptsize$\pm$.524} & \textbf{0.056}{\scriptsize$\pm$.022} \\
    Wins$^\ddagger$ & 1 & 0 & 2 & 0 & 0 & \textbf{9} \\
    \bottomrule
  \end{tabular}
\end{table}

\textbf{Analysis.}
SPLICE achieves the lowest mean Load-only MSE ($0.056{\pm}0.022$,
averaged over 3 seeds, excluding the degenerate SEMAResNstar1009),
outperforming SAITS ($0.064$), BRITS ($0.068$),
Seasonal~Na\"ive ($0.105$), CSDI ($0.252$), and TimesNet ($0.760$),
winning 9/12 non-degenerate datasets (Table~\ref{tab:ext-baselines-mse};
Figure~\ref{fig:baseline_comparison}, appendix).
On the four public benchmarks (UCI + ETTh1), SPLICE wins three of four
(Figure~\ref{fig:uci_reconstruction}, appendix).
MAPE results (Table~\ref{tab:ext-baselines-mape}, appendix) confirm the pattern:
lowest MAPE on 7/11 reportable datasets.
SAITS retains an advantage on WCMA1011lig and UCI\_Elec\_MT\_001 where
bridge embedding recovery loses fine-grained structure; Seasonal~Na\"ive
wins on RIInd1014 via perfect annual copying.
\textbf{Statistical significance.}
Wilcoxon signed-rank tests (one-sided, per-dataset 3-seed means,
excluding SEMAResNstar1009):
SPLICE vs.\ CSDI ($n{=}12$): $p{<}0.001$;
vs.\ TimesNet ($n{=}12$): $p{<}0.001$;
vs.\ Seasonal ($n{=}8$): $p{=}0.074$;
vs.\ BRITS ($n{=}12$): $p{=}0.117$;
vs.\ SAITS ($n{=}12$): $p{=}0.102$.
The latter three do not reach $p{<}0.05$, consistent with close means
($0.056$ vs.\ $0.068$ / $0.064$ / $0.105$); additional datasets
or seeds would sharpen these comparisons.

\textbf{Multivariate latent advantage.}
Despite encoding all eight channels into a shared 64-dim embedding,
SPLICE achieves the lowest Load-only MSE, indicating that
weather--load correlations \emph{improve} rather than dilute
reconstruction.  Per-hour weather conditioning, noise-augmented
training, and Load-weighted loss recover the informational advantage
of observation-space methods (Figure~\ref{fig:gap_comparison},
appendix).

\subsection{Conformal Prediction Intervals}
\label{sec:conformal}

Table~\ref{tab:conformal} (appendix) compares static CQR and adaptive
conformal inference~(ACI) across the nine proprietary datasets
($1 - \alpha = 0.95$ target).
ACI fixes coverage violations on two critical datasets: RIInd1014
(86.0\%~$\to$~93.5\%, +7.5\,pp) and UES\_NH\_Med
(89.9\%~$\to$~93.4\%, +3.5\,pp). On datasets where static CQR
already exceeded the target (e.g.\ WCMA1011lig at 97.2\%), ACI tightens
the bands (95.4\%, $-$1.8\,pp), recovering efficiency without losing
validity. Figures~\ref{fig:aci_trajectories} and~\ref{fig:conformal_bands}
(appendix) show the adaptive $\alpha_t$ trajectories and resulting
prediction bands.

\subsection{Distributional Quality and Uncertainty Calibration}
\label{sec:distributional}

We complement point accuracy with distributional metrics: CRPS,
interval coverage (Section~\ref{sec:conformal}), and boundary
smoothness (Appendix~\ref{app:boundary_smoothness}).
\textbf{CRPS.}
We compute CRPS via the energy form~\citep{gneiting2007strictly}
using a 20-member ensemble generated by perturbing bridge embeddings
with Gaussian noise ($\sigma{=}0.15$) and decoding each independently.
SPLICE-ensemble achieves the lowest average CRPS (0.161;
Table~\ref{tab:crps}, appendix), outperforming SAITS
(0.197, $-18.3\%$) and its own deterministic MAE
(0.228, $-26.0\%$), confirming that latent perturbations capture
genuine uncertainty.
SPLICE is the \emph{only} evaluated method that provides calibrated
prediction intervals; all baselines are purely deterministic
(Figure~\ref{fig:coverage_comparison}, appendix).
Figure~\ref{fig:crps_comparison} shows per-dataset CRPS, and
Figure~\ref{fig:crps_improvement} quantifies the ensemble
improvement over deterministic prediction.
\textbf{Ablations and sensitivity.}
Appendices~\ref{sec:diffusion_ablation}--\ref{sec:incremental_ablation}
confirm FM-A wins 10/13 datasets ($-8.7\%$ MSE vs.\ DDIM, $10\times$
fewer steps), and the enhanced decoder + ensembling each contribute
cumulatively ($-9.3\%$ MSE).  SPLICE maintains the lowest mean rank
(2.00--2.25) across 7/30/91-day gaps with 18/32 total wins (56\%;
Appendix~\ref{sec:gap_length}).

\section{Discussion}
\label{sec:discussion}

\begin{figure}[t]
  \centering
  \includegraphics[width=0.85\linewidth]{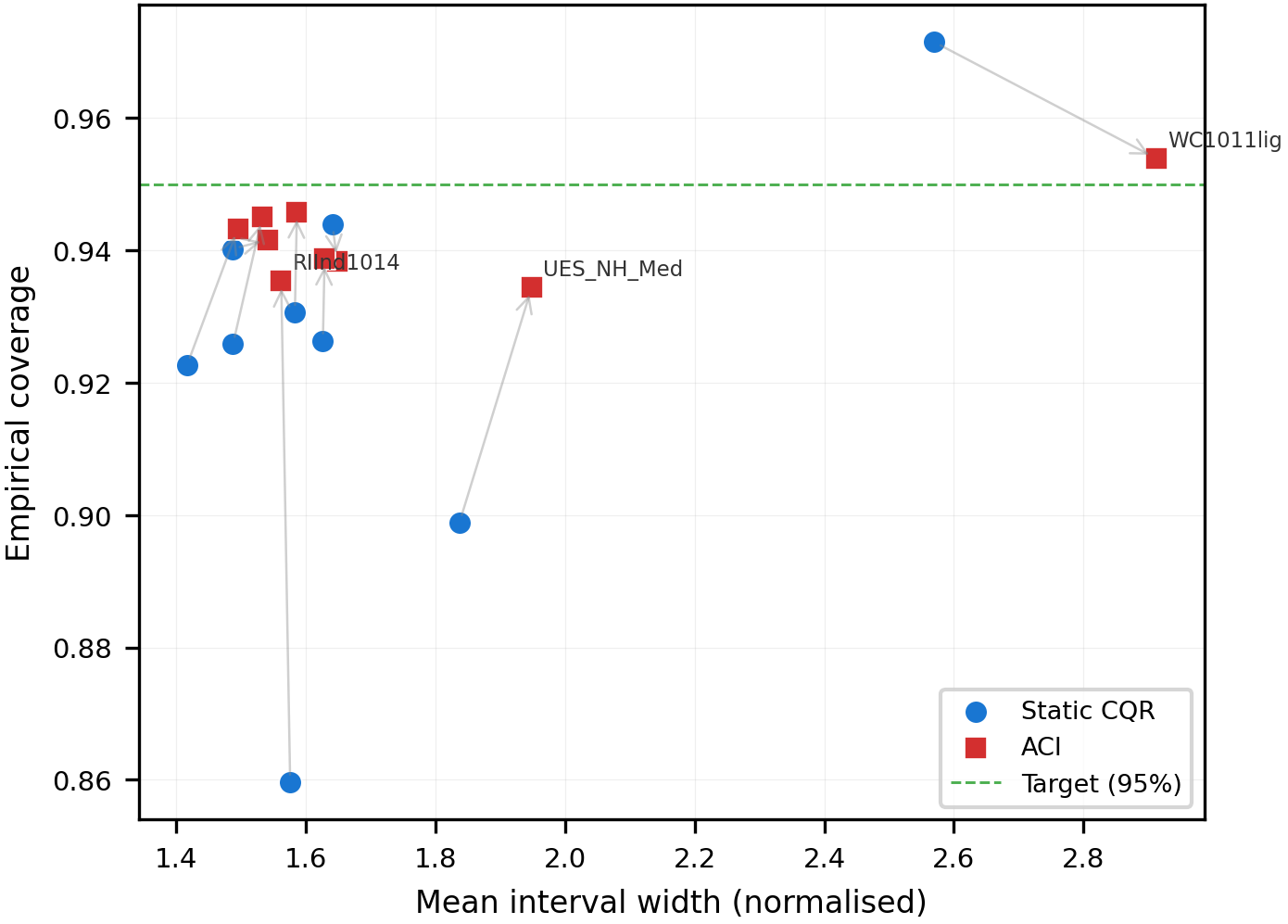}
  \caption{Coverage vs.\ interval width. Each point is one dataset; grey
  arrows show the shift from static CQR (blue circles) to ACI (red
  squares). Green dashed line: 95\% target.}
  \label{fig:pareto}
\end{figure}

\textbf{Why adaptive conformal matters.}
Static CQR under-covers by up to 7.5~pp under non-stationarity;
ACI recovers 93--95\% coverage while tightening over-wide intervals
(Figure~\ref{fig:pareto}).
\textbf{Modularity and practical implications.}
The stage-decoupled architecture, JEPA encoder, bridge, generative
model, decoder, ACI, shares only a 64-dimensional embedding interface;
each component can be replaced without upstream retraining.
The DDIM$\to$flow-matching swap required changing only the training
objective and sampling loop, and multivariate latent modelling
recovers all eight channels at no additional cost.
Section~\ref{sec:transfer} exploits this modularity to demonstrate
cross-domain transfer: a single pooled encoder trained on nine feeds
generalises to four unseen domains, matching or surpassing
per-dataset oracles with only bridge fine-tuning.
\textbf{Limitations.}
Performance on gaps shorter than one week or longer than three months
is uncharacterised.  The full diffusion variant requires 50 reverse
steps, though the flow-matching variant already achieves comparable
quality in 5 steps; progressive distillation or consistency models
could reduce this further.  Generalisation to other domains (gas,
water, renewables) and climates beyond the northeastern US, Portugal,
and China is untested.  The transfer study
(Section~\ref{sec:transfer}) reports single-seed results on four
held-out datasets; multi-seed evaluation with confidence intervals
is deferred to the extended version.
All main results (Table~\ref{tab:ext-baselines-mse}) are reported
over three random seeds (42, 43, 44) with standard deviations;
the SPLICE vs.\ BRITS/SAITS differences do not reach $p{<}0.05$
on 12 non-degenerate datasets, reflecting genuinely close performance rather than
noise.
Finally, on datasets with bimodal or
zero-inflated patterns (WCMA1011lig, RIInd1014), the 64-dimensional
embedding loses fine-grained structure that observation-space methods
capture directly; end-to-end bridge--decoder fine-tuning may close
this gap.
\textbf{Broader impact.}
Reliable gap-filling with calibrated uncertainty is critical for grid
planning and demand-side management; over-confident imputations can
lead to under-provisioning or missed demand response events.
Data availability and code release details are provided in
Appendix~\ref{app:data_availability}.

\section{Transferability of JEPA Representations}
\label{sec:transfer}

A natural question is whether the JEPA encoder, trained on
per-dataset data, can learn representations that generalise across
heterogeneous time-series domains.  If so, the modular
SPLICE pipeline could be deployed on new feeds with minimal
adaptation, amortising the cost of encoder training.
\textbf{Pooled encoder.}
We train a single JEPA encoder on all nine proprietary datasets
simultaneously (``pooled JEPA'').  Each dataset is projected to the
same eight common features and normalised independently; the encoder
architecture is identical to the per-dataset variant ($\sim$1.06M
parameters, $d_{\mathrm{repr}}{=}64$).  The resulting latent space
clusters by dataset identity \emph{and} season
(Figure~\ref{fig:transfer_umap}, appendix), confirming that the
encoder captures both domain-specific signatures and shared temporal
structure without supervision.
\textbf{Adaptation spectrum.}
We evaluate on four held-out datasets (UCI\_001, UCI\_150, UCI\_320, ETTh1),
none seen during pooled training, under three adaptation regimes of
decreasing target-domain supervision. Regime~A (oracle) uses a fully
per-dataset pipeline, with a JEPA, bridge, and decoder. Regime~B freezes a pooled JEPA encoder
and trains a dataset-specific bridge and decoder on target data. Regime~C is fully zero-shot at the
representation level: both the pooled encoder and bridge are frozen,
and only a lightweight decoder is trained on target data.

\begin{table}[t]
  \caption{Transfer evaluation: gap-inpainting MSE on four held-out
  datasets under three adaptation regimes (single seed).  Bold marks
  the best result per dataset; percentages are relative to
  Regime~A.  Regime~B matches or improves upon the oracle on three of
  four datasets; Regime~C succeeds on UCI\_001 ($-5.1\%$) despite
  using only 2\,s of target-domain training.}
  \label{tab:transfer}
  \centering
  \small
  \begin{tabular}{lccc}
    \toprule
    Dataset & Regime~A & Regime~B & Regime~C \\
    \midrule
    UCI\_001 & 0.2072 & 0.2889 (\textcolor{red}{+39\%})
             & \textbf{0.1967} (\textcolor{teal}{$-$5\%}) \\
    UCI\_150 & 0.0865 & \textbf{0.0804} (\textcolor{teal}{$-$7\%})
             & 0.2036 (\textcolor{red}{+135\%}) \\
    UCI\_320 & 0.0911 & \textbf{0.0888} (\textcolor{teal}{$-$3\%})
             & 0.1507 (\textcolor{red}{+65\%}) \\
    ETTh1    & 0.1875 & \textbf{0.1800} (\textcolor{teal}{$-$4\%})
             & 0.2013 (\textcolor{red}{+7\%}) \\
    \midrule
    Mean     & 0.1431 & 0.1595 ({+11\%})
             & 0.1881 ({+31\%}) \\
    Median   & 0.1393 & \textbf{0.1346} (\textcolor{teal}{$-$3\%})
             & 0.1760 ({+26\%}) \\
    \bottomrule
  \end{tabular}
\end{table}

\textbf{Results.}
Table~\ref{tab:transfer} summarises the findings.  Regime~B, which
requires only bridge and decoder fine-tuning ($\sim$60\,s), matches
or \emph{improves upon} the per-dataset oracle on three of four
held-out domains (UCI\_150: $-7.1\%$, UCI\_320: $-2.6\%$, ETTh1:
$-4.0\%$).  We attribute this to a regularisation effect: the
pooled encoder, trained on nine diverse feeds, produces embeddings
that are more robust to distribution shift than per-dataset
encoders.  The median gap MSE under Regime~B is $3.4\%$ lower than
Regime~A, even though the mean is higher due to the outlier on
UCI\_001, a domain with low absolute load ($\bar{y}{=}1.6$\,kW)
where small prediction errors are amplified in relative terms.
Regime~C demonstrates that fully zero-shot transfer is viable when
domain alignment is favourable: on UCI\_001, the pooled bridge
\emph{outperforms} the oracle ($-5.1\%$) with only 2\,s of decoder
training.  On the remaining datasets, the gap widens, indicating that
bridge adaptation remains necessary when the target load profile
differs substantially from the training pool.  We note that Regime~C
operates at a slight disadvantage: the pooled bridge was trained with
17-dimensional conditioning (including Cloud and Precipitation from
the training domains), whereas UCI and ETTh1 lack these features and
receive zero-filled placeholders at evaluation time.
These results suggest a practical deployment strategy: freeze the
pooled JEPA encoder and fine-tune only the bridge and decoder on a
small amount of target data.  As the diversity of the training pool
grows, the zero-shot frontier is expected to expand further.

\section{Conclusion}
\label{sec:conclusion}

We presented SPLICE, a modular pipeline for long-horizon time-series
inpainting that combines JEPA representation learning, a multi-mode
conditional latent bridge, an hourly-conditioned decoder, and adaptive
conformal inference.  Evaluated on thirteen datasets with 91-day gaps, SPLICE
achieves the lowest mean Load-only MSE (0.056 vs.\ 0.064 for SAITS),
winning 9/12 non-degenerate datasets while jointly reconstructing all eight input
channels in a shared latent space.  A 20-member noise-perturbed
ensemble yields the lowest CRPS ($0.161$, $-18.3\%$ vs.\ the best
deterministic baseline), and ACI achieves 93--95\% empirical coverage
across all feeds, correcting under-coverage by up to $+7.5$~pp.
The flow-matching backend provides a $5{-}10\times$ speedup over DDIM
with improved quality ($-8.7\%$ MSE), and the stage-decoupled design
permits component replacement without upstream retraining.
A preliminary transfer study (Section~\ref{sec:transfer}) shows that
a pooled JEPA encoder, trained on nine feeds, transfers to four
unseen domains: with only bridge fine-tuning ($\sim$60\,s), the
frozen encoder matches or exceeds per-dataset oracles on three of
four held-out datasets, and fully zero-shot transfer succeeds when
source--target alignment is favourable.
Future work includes variable-length gaps, end-to-end bridge--decoder
fine-tuning for bimodal datasets, consistency distillation for
single-step flow matching, and scaling the pooled training pool to
broaden the zero-shot transfer frontier.

\appendix

\section{ACI Coverage Guarantee}
\label{app:aci}

For completeness, we restate the coverage guarantee of Adaptive
Conformal Inference~\citep{gibbs2021aci} and make explicit the
convergence rate that underpins our experimental design.

\begin{proposition}[Asymptotic coverage, Gibbs \& Cand\`{e}s 2021]
Let $\{\alpha_t\}_{t \geq 1}$ evolve according to
Eq.~\eqref{eq:aci} with learning rate $\gamma > 0$ and clamping
to $[\varepsilon, 1 - \varepsilon]$.  Then, regardless of the
data-generating process,
\begin{equation}
  \Bigl|\frac{1}{T}\sum_{t=1}^{T} \mathrm{err}_t - \alpha\Bigr|
  \;\leq\; \frac{1 - 2\varepsilon}{\gamma\, T},
  \label{eq:aci_bound}
\end{equation}
so $\lim_{T \to \infty} T^{-1}\sum_t \mathrm{err}_t = \alpha$.
\end{proposition}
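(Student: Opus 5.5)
The plan is the telescoping argument of Gibbs \& Cand\`{e}s, adapted to the clamped recursion. First I rewrite the clamped update of Eq.~\eqref{eq:aci} as an unclamped step plus a projection correction:
\begin{equation*}
  \alpha_{t+1} = \alpha_t + \gamma(\alpha - \mathrm{err}_t) + c_t,
\end{equation*}
where $c_t$ is the signed amount by which the projection onto $[\varepsilon, 1-\varepsilon]$ moves the raw iterate. Summing from $t=1$ to $T$ and dividing by $\gamma T$ gives the exact identity
\begin{equation*}
  \frac{1}{T}\sum_{t=1}^{T}\mathrm{err}_t - \alpha
  = \frac{\alpha_1 - \alpha_{T+1}}{\gamma T} + \frac{1}{\gamma T}\sum_{t=1}^{T} c_t .
\end{equation*}

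Next I bound the boundary term. Both $\alpha_1$ and $\alpha_{T+1}$ lie in $[\varepsilon, 1-\varepsilon]$, which requires the initial value to be chosen there (e.g.\ $\alpha_1=\alpha$). Hence $|\alpha_1 - \alpha_{T+1}| \le 1 - 2\varepsilon$, which is exactly the numerator in Eq.~\eqref{eq:aci_bound}. No assumption on the data enters here: $\mathrm{err}_t\in\{0,1\}$ and the recursion are purely deterministic. This is what makes the statement hold ``regardless of the data-generating process.''

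The main obstacle is the correction sum $\sum_t c_t$. If it vanishes, the stated bound follows immediately and the limit statement follows by letting $T\to\infty$. It vanishes in particular when the clamp never binds. I would argue this from the structure of the conformal sets, and would try the following case analysis first.

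\emph{Lower boundary.} Suppose $\alpha_t$ is near $\varepsilon$. The band uses the $(1-\alpha_t)$ calibration quantile, so it is maximally wide. If the set $C_t(\alpha_t)$ covers every observation for $\alpha_t\le\varepsilon+\gamma(1-\alpha)$, then $\mathrm{err}_t=0$ there. The step is then $+\gamma\alpha>0$, and the lower clamp is never active.

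\emph{Upper boundary.} Symmetrically, suppose $C_t(\alpha_t)$ is empty (or misses $y_t$) for $\alpha_t \ge 1-\varepsilon-\gamma\alpha$. Then $\mathrm{err}_t=1$, the step is $-\gamma(1-\alpha)<0$, and the upper clamp never binds.

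Under these two conditions, which mirror the convention in Gibbs \& Cand\`{e}s that $C_t(\alpha)$ is the full space for $\alpha\le0$ and empty for $\alpha\ge1$, every $c_t=0$.

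If these conditions cannot be guaranteed, I would fall back to a weaker result. The corrections $c_t$ are nonzero only on boundary steps, and each is bounded in magnitude by $\gamma\max(\alpha,1-\alpha)$. So the bound holds with an additive term proportional to the fraction of clamped steps. The clean bound \eqref{eq:aci_bound} should therefore be stated under the nonbinding-clamp assumption. This is the one place where I expect the statement as written to need an extra hypothesis.
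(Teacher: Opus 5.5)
You follow the paper's route: telescope the update, then bound $|\alpha_1-\alpha_{T+1}|$ by the width $1-2\varepsilon$ of the clamping interval. Your bookkeeping, however, is the correct version of it. The paper's sketch writes $\sum_{t=1}^{T}(\mathrm{err}_t-\alpha)=(\alpha_1-\alpha_{T+1})/\gamma$ as an exact identity. That identity holds only when your projection sum $\sum_t c_t$ vanishes, so the sketch uses the clamp to bound the boundary term while ignoring what the clamp does to the telescoping.

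Your suspicion that an extra hypothesis is needed is right, and in fact the statement is false as written. In the paper's construction $C_t(\alpha_t)=[\hat{x}_t-\hat{q}_t,\hat{x}_t+\hat{q}_t]$, where $\hat{q}_t$ is an empirical $(1-\alpha_t)$-quantile of finitely many residuals. So for every $\alpha_t\ge\varepsilon>0$ the band has bounded half-width. Take $y_t$ with $|y_t-\hat{x}_t|$ larger than every residual. Then:
\begin{itemize}
\item $\mathrm{err}_t=1$ for all $t$;
\item $\alpha_t$ drifts down to $\varepsilon$ and stays clamped there;
\item $T^{-1}\sum_t\mathrm{err}_t=1$ for every $T$.
\end{itemize}
This violates the bound once $T>(1-2\varepsilon)/(\gamma(1-\alpha))$, which is about $105$ steps for the paper's $\gamma=0.01$, $\alpha=0.05$, $\varepsilon=0.001$. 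The limit is $1$, not $\alpha$.

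So the valid statements are the ones you isolate.
\begin{itemize}
\item \emph{Impose your two boundary conditions.} These transplant the Gibbs--Cand\`es convention that $C_t(\alpha)$ is the whole line for $\alpha\le 0$ and empty for $\alpha\ge 1$. With finite-quantile bands they do not hold automatically. They therefore restrict or redefine the sets near the clamp; they are not a property of the recursion alone.
\item \emph{Drop the clamp, as Gibbs and Cand\`es do.} Then $\alpha_t\in[-\gamma,1+\gamma]$, and the same telescoping gives $\bigl|T^{-1}\sum_t\mathrm{err}_t-\alpha\bigr|\le(\max\{\alpha_1,1-\alpha_1\}+\gamma)/(\gamma T)$.
\end{itemize}
Your fallback estimate is correct: the correction term is bounded by $\max(\alpha,1-\alpha)\,N_T/T$, with $N_T$ the number of clamped steps up to $T$. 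As the example shows, though, this term need not vanish, so it cannot deliver the limit statement. Practically, your nonbinding condition can be checked after the fact on a recorded $\alpha_t$ trajectory. Whenever the projection never activates, your identity is exact and the stated bound applies to that run.
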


\begin{proof}[Proof sketch]
The update rule ensures $|\alpha_{t+1} - \alpha_t| \leq \gamma$,
and clamping keeps $\alpha_t \in [\varepsilon, 1 - \varepsilon]$.
Define $\Delta_t = \alpha_t - \alpha$.  Summing the update over
$t = 1, \dots, T$ and rearranging:
$\sum_{t=1}^{T} (\mathrm{err}_t - \alpha) =
(\alpha_1 - \alpha_{T+1})/\gamma$,
whence $\bigl|\frac{1}{T}\sum_t \mathrm{err}_t - \alpha\bigr|
\leq |\alpha_1 - \alpha_{T+1}| / (\gamma T)
\leq (1 - 2\varepsilon) / (\gamma T)$.
\end{proof}

In our experiments $\gamma = 0.01$ and $\varepsilon = 0.001$,
giving a worst-case deviation of
$\leq 0.998 / (0.01 \cdot T)$, which is $<$0.01 for $T \geq 100$
calibration steps and negligible for the $\sim$1{,}000 hourly steps
in each 91-day gap.

\section{Training and Evaluation Protocol}
\label{app:protocol}

\paragraph{Feature normalisation.}
All input features (Load, Temperature, Humidex, Wind, etc.)\ are
z-score normalised using per-feature mean and standard deviation
computed on the training split only.  The same statistics are applied
to the validation and test data to prevent information leakage.
Conditioning features for the decoder (hourly weather and calendar
encodings) undergo a separate z-score normalisation with statistics
fitted on the training portion of the daily conditioning matrix.

\paragraph{Train/validation split.}
The most recent 15\% of calendar days form the validation set;
all earlier days constitute the training set.  This temporal split
ensures no future information leaks into training, mirroring the
deployment scenario of retrospective gap-filling.

\paragraph{Sliding window construction.}
Each evaluation window spans $365 + 91 = 456$ consecutive days:
the first 365~days provide observed context and the final 91~days
are masked.  A stride-1 sliding window generates all valid windows
whose start index satisfies
$\mathrm{start} + 365 \geq 0.85 \cdot n_{\mathrm{days}}$,
producing 20--40 evaluation windows per dataset depending on the
available data span.

\paragraph{Decoder training.}
The hourly-conditioned decoder is trained on the training split
with the JEPA encoder frozen.  For each training sample, the
encoder produces a 64-dimensional embedding for each day, which
the decoder maps to $24\!\times\!d_{\mathrm{features}}$ hourly
predictions.  With probability 0.5, Gaussian noise ($\sigma = 0.15$)
is added to the embedding before decoding, forcing the decoder to
be robust to bridge reconstruction errors.  The Load-weighted loss
amplifies the gradient contribution of the Load feature by a factor
of~5.

\paragraph{Conformal calibration protocol.}
For standard CQR, all validation windows except the last serve as
the calibration set; the final window is used for evaluation.
For ACI, the first 50\% of validation windows initialise the base
quantile $\hat{q}$ using $S{=}50$ DDIM samples per window; the
remaining windows are processed in temporal order with $\alpha_t$
updates after each window (using $S{=}20$ samples).  Coverage and
interval width are reported for the online phase only.

\section{Classifier-Free Guidance Scale Sensitivity}
\label{app:guidance}

Table~\ref{tab:guidance_sweep} reports the gap-frame MSE for guidance
scales $w \in \{1, 2, 3, 4, 5\}$ across the nine proprietary datasets.  The
optimal scale is dataset-dependent: low-variance feeds
(RIInd1014, SEMAResNstar1009) prefer $w{=}1$ (no guidance), while
volatile feeds (UES\_NH\_Med) benefit from strong guidance ($w{=}5$).
Commercial and residential loads cluster around $w{=}2$--$4$.
The sensitivity is moderate: across all datasets, the worst scale
degrades MSE by at most $\sim$37\% relative to the best, and most
datasets show $<$10\% variation between the top two scales.

\begin{table}[H]
  \caption{Guidance-scale sweep: gap-frame MSE (mean $\pm$ std over
           DDIM samples). Bold marks the per-dataset optimum.
           All values on normalised $[0,1]$ scale.}
  \label{tab:guidance_sweep}
  \centering
  \small
  \begin{tabular}{lrrrrr}
    \toprule
    Dataset & $w{=}1$ & $w{=}2$ & $w{=}3$ & $w{=}4$ & $w{=}5$ \\
    \midrule
    PepcoCOM          & .291  & .288  & \textbf{.283} & .289  & .289 \\
    RICom1013         & .343  & \textbf{.330} & .342  & .355  & .364 \\
    RIInd1014         & \textbf{.508} & .554  & .603  & .652  & .698 \\
    RIRes1012         & .363  & \textbf{.337} & .339  & .348  & .362 \\
    SEMAResNstar1009  & \textbf{.257} & .271  & .286  & .311  & .335 \\
    UES\_NH\_Med      & .525  & .443  & .402  & .397  & \textbf{.395} \\
    WCMA1010res       & .322  & .314  & .304  & \textbf{.294} & .300 \\
    WCMA1011lig       & .389  & .365  & \textbf{.363} & .364  & .371 \\
    WCMAnatGridRes1004& .420  & .414  & \textbf{.411} & .424  & .437 \\
    \bottomrule
  \end{tabular}
\end{table}

\section{RMSE and MAE in Physical Units}
\label{app:physical_units}

Table~\ref{tab:physical_units} reports gap-reconstruction error in
original measurement units, derived from the min-max normalised
Load-only MSE (Table~\ref{tab:ext-baselines-mse}) and the observed Load
range of each dataset.  RMSE is computed as
$\mathrm{RMSE} = \sqrt{\mathrm{MSE}_{[0,1]}} \times (\max - \min)$;
MAE is approximated under a Gaussian error assumption as
$\mathrm{MAE} \approx \sqrt{2/\pi}\;\mathrm{RMSE}$.

\begin{table}[H]
  \caption{Gap-reconstruction error in physical units.
           Units vary by dataset: kW for utility-metered feeds,
           kWh for hourly-integrated feeds.}
  \label{tab:physical_units}
  \centering
  \small
  \begin{tabular}{lrrl}
    \toprule
    Dataset & RMSE & MAE & Unit \\
    \midrule
    PepcoCOM                &    11.13 &     8.88 & kW \\
    RICom1013               &   144.45 &   115.25 & kW \\
    RIInd1014               &    50.97 &    40.66 & kW \\
    RIRes1012               &    44.49 &    35.50 & kW \\
    SEMAResNstar1009        &  $<$0.01 &  $<$0.01 & kWh \\
    UES\_NH\_Med            &    17.29 &    13.80 & kWh \\
    WCMA1010res             &    23.33 &    18.61 & kWh \\
    WCMA1011lig             &     5.20 &     4.15 & kWh \\
    WCMAnatGridRes1004      &    46.27 &    36.92 & kW \\
    \midrule
    UCI\_Elec\_MT\_001      &     1.51 &     1.20 & kWh \\
    UCI\_Elec\_MT\_150      &     3.92 &     3.13 & kWh \\
    UCI\_Elec\_MT\_320      &    18.55 &    14.80 & kWh \\
    \bottomrule
  \end{tabular}
\end{table}

\section{UCI Electricity Dataset Preprocessing}
\label{app:uci_preprocessing}

We use the UCI ElectricityLoadDiagrams20112014
dataset~\citep{uci_electricity}, which contains 15-minute electricity
consumption readings (in kWh) for 370 Portuguese clients spanning
2011--2014.  Three clients were selected to span the variance spectrum:
MT\_001 (low variance, $\sigma = 5.6$\,kWh),
MT\_150 (medium, $\sigma = 13.2$\,kWh), and
MT\_320 (high, $\sigma = 28.0$\,kWh).

\paragraph{Temporal aggregation.}
The native 15-minute resolution is resampled to hourly by taking the
mean of each four-reading block, matching the hourly granularity of
the proprietary utility datasets.

\paragraph{Weather covariates.}
Hourly ERA5 reanalysis weather for Lisbon ($38.72^{\circ}$N,
$9.14^{\circ}$W) was retrieved via the Open-Meteo historical
API~\citep{openmeteo}: temperature, dew point, relative humidity,
wind speed, and WMO weather code.  Temperature and dew point are
converted from Celsius to Fahrenheit; wind from km/h to mph; and
Humidex is approximated via the Canadian formula, all for
consistency with the proprietary pipeline feature set.

\paragraph{Output format.}
Each client is exported as a semicolon-delimited CSV with columns
\texttt{Date;\allowbreak Heure;\allowbreak Temperature;\allowbreak Humidex;\allowbreak Weather;\allowbreak Wind;\allowbreak Load},
identical to the proprietary data format.  This allows the entire
training, bridge, and evaluation pipeline to process UCI clients
without any code modifications.

\section{JEPA Embedding Visualisation}
\label{app:jepa_viz}

Figure~\ref{fig:jepa_tsne} shows t-SNE projections of the
64-dimensional JEPA embeddings for four representative datasets.
The top row colours points by meteorological season (DJF/MAM/JJA/SON)
and the bottom row by day type (weekday vs.\ weekend).
Seasonal structure is clearly captured across all datasets: winter
and summer clusters separate well, confirming that the JEPA encoder
learns temperature-driven load patterns.
Weekday/weekend separation is most pronounced for commercial loads
(PepcoCOM, RICom1013), where occupancy schedules differ markedly
between business days and weekends.

\begin{figure}[H]
  \centering
  \includegraphics[width=\linewidth]{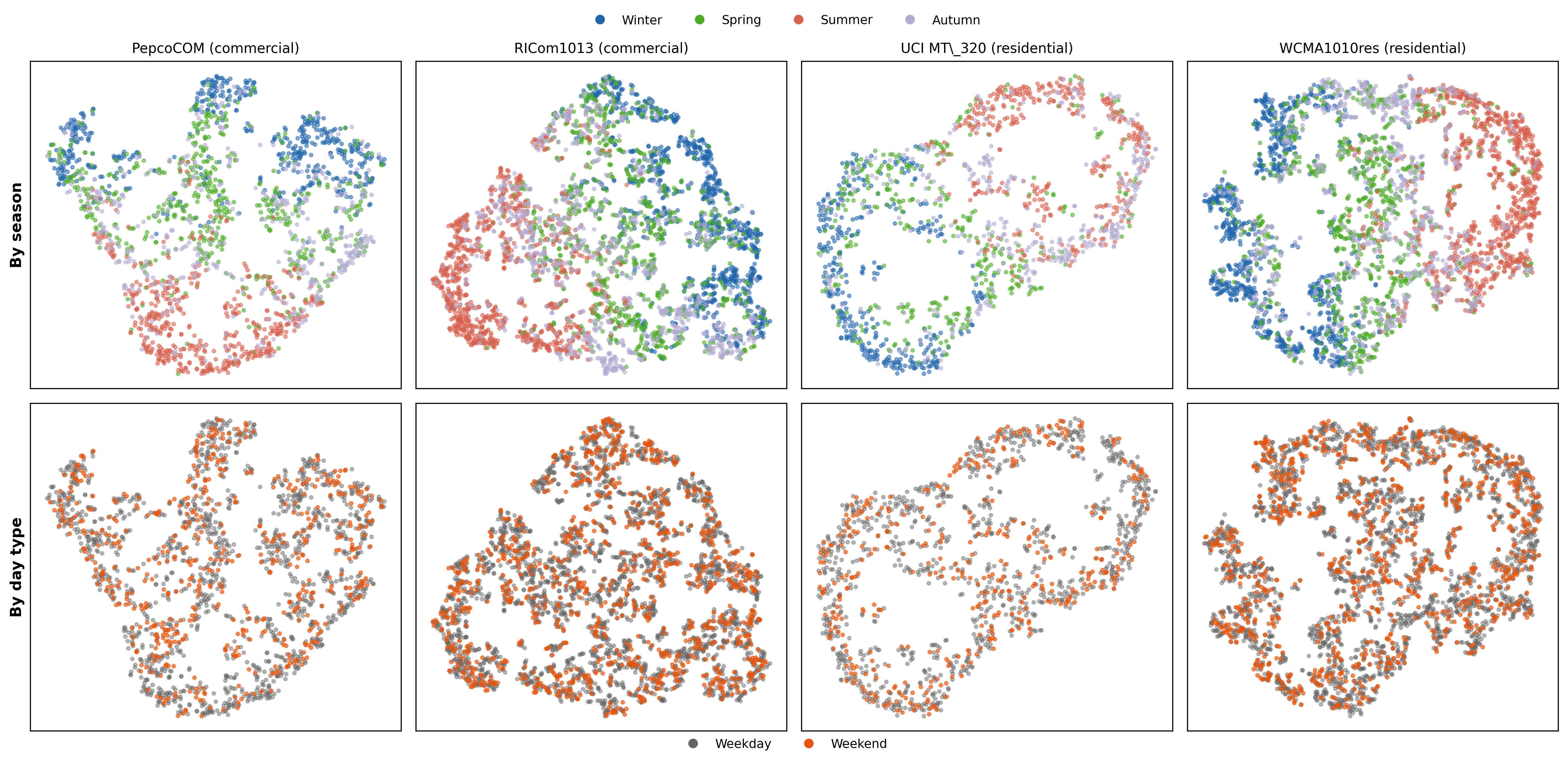}
  \caption{t-SNE projections of 64-dimensional JEPA embeddings for four
           datasets, coloured by season (top) and day type (bottom).
           The encoder captures both seasonal and occupancy structure
           without supervision.}
  \label{fig:jepa_tsne}
\end{figure}

\begin{figure}[H]
  \centering
  \includegraphics[width=\linewidth]{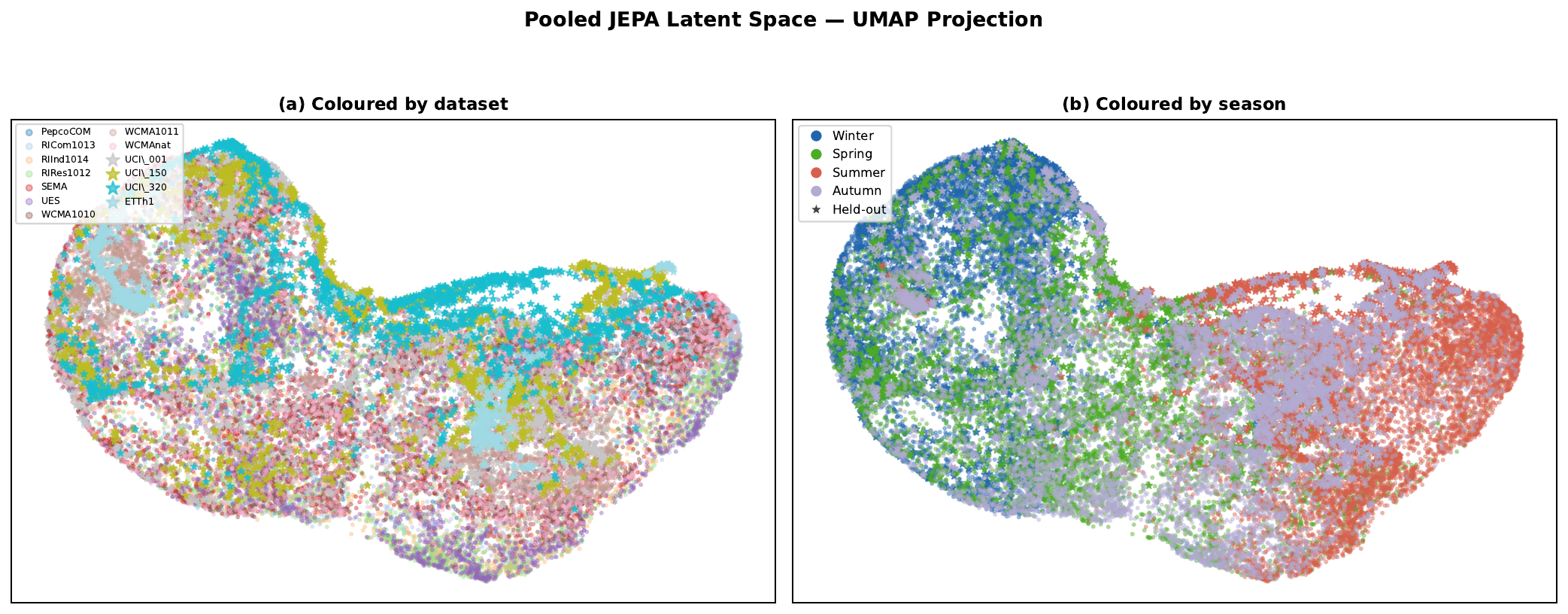}
  \caption{UMAP projection of pooled JEPA embeddings for all thirteen
           datasets (nine training, four held-out).
           \textbf{(a)}~Coloured by dataset identity: training datasets
           (circles) form distinct but overlapping clusters; held-out
           datasets (stars) interleave with training domains, indicating
           that the pooled encoder maps unseen feeds into a shared
           manifold.
           \textbf{(b)}~Coloured by season: the seasonal structure
           (winter/summer separation) is preserved across all domains,
           confirming that temporal dynamics transfer.}
  \label{fig:transfer_umap}
\end{figure}

\section{Data Availability}
\label{app:data_availability}

The nine proprietary load datasets used in this study are utility
records provided under data-sharing agreements; they cannot be
publicly released.  The three UCI Electricity datasets are publicly
available from the UCI Machine Learning
Repository~\citep{uci_electricity}.  The ETTh1 dataset is publicly
available from the Informer repository~\citep{zhou2021informer}.
The full pipeline code, model architectures, training scripts,
configuration files, and pre-trained checkpoints will be released
under an open-source licence upon acceptance, enabling reproduction
on any hourly-resolution load dataset with weather covariates.


\section{Generative Bridge Details}
\label{app:bridge_details}

This appendix provides the full mathematical formulations for the DDIM
diffusion and flow-matching bridges summarised in
Section~\ref{sec:diffusion}.

\paragraph{DDIM diffusion bridge.}
The conditional diffusion model uses a Transformer-based denoiser
with cosine noise schedule ($s{=}0.008$,
1000~timesteps)~\citep{nichol2021improved},
$v$-prediction~\citep{salimans2022progressive}, and Min-SNR loss
weighting ($\gamma{=}5.0$)~\citep{hang2023minsnr}.  DDIM sampling
uses 50~steps with $\eta{=}0.0$~\citep{song2021ddim}.

The forward process adds Gaussian noise to the gap embeddings:
\begin{equation}
  q(z_t \mid z_0) = \mathcal{N}\!\bigl(z_t;\;
  \sqrt{\bar{\alpha}_t}\, z_0,\;
  (1 - \bar{\alpha}_t)\, I\bigr).
  \label{eq:forward}
\end{equation}
The denoiser $v_\phi(z_t, t, c)$ predicts the $v$-target
$v_t = \sqrt{\bar{\alpha}_t}\,\epsilon
- \sqrt{1 - \bar{\alpha}_t}\,z_0$,
conditioned on $c = z_{\mathrm{obs}}$.  The $v$-parameterisation
produces more uniform gradient magnitudes across noise levels than
$\epsilon$-prediction~\citep{salimans2022progressive}.

At inference, classifier-free guidance~\citep{ho2022cfg} extrapolates
away from the unconditional score:
\begin{equation}
  \tilde{v}(z_t, t, c) = (1 + w)\, v_\phi(z_t, t, c)
  - w\, v_\phi(z_t, t, \varnothing),
  \label{eq:cfg}
\end{equation}
where $w \in \{1,2,3,4,5\}$ is selected per-dataset via validation
MSE (Appendix~\ref{app:guidance}) and $\varnothing$ is the null
context obtained by dropping conditioning with probability
$p_{\mathrm{uncond}}{=}0.15$ during training.

\paragraph{Flow-matching bridge.}
Given a data sample $z_0$ and noise $\epsilon \sim
\mathcal{N}(0, I)$, the interpolation path is
$z_t = (1 - t)\, z_0 + t\, \epsilon$, $t \in [0,1]$.
The flow-matching objective regresses onto the conditional velocity
field $u_t = \epsilon - z_0$:
\begin{equation}
  \mathcal{L}_{\mathrm{FM}} = \mathbb{E}_{t \sim \mathcal{U}[0,1],\,
  z_0,\, \epsilon}
  \bigl\| v_\theta(z_t, t, c) - (\epsilon - z_0) \bigr\|^2.
  \label{eq:fm_loss}
\end{equation}
Classifier-free guidance applies identically to the flow-matching
velocity field, with the same $p_{\mathrm{uncond}}{=}0.15$.


\section{Inference Algorithm}
\label{app:algorithm}

\begin{algorithm}[H]
\caption{SPLICE Inference: JEPA--Bridge Gap-Filling with ACI}
\label{alg:pipeline}
\begin{algorithmic}[1]
\REQUIRE Observed series $\mathbf{X}_{\mathrm{ctx}} \in \mathbb{R}^{C \times F \times 24}$
  (context days), per-hour covariates~$\mathbf{c}$, gap length~$G$
\ENSURE Imputed gap $\hat{\mathbf{X}}_{\mathrm{gap}} \in \mathbb{R}^{G \times F \times 24}$,
  confidence bands $(L_t, U_t)$
\STATE \textbf{// Stage 1: Encode context}
\FOR{each day $d \in \{1, \dots, C\}$}
  \STATE $\mathbf{z}_d \leftarrow \mathrm{Enc}_{\theta}(\mathbf{x}_d)$
    \hfill $\triangleright$ JEPA online encoder
\ENDFOR
\STATE \textbf{// Stage 2: Predict gap embeddings}
\STATE $\hat{\mathbf{Z}}_{\mathrm{gap}} \leftarrow
  \mathrm{Bridge}_{\phi}(\mathbf{Z}_{\mathrm{ctx}},\, \mathbf{c},\, \mathbf{m})$
  \hfill $\triangleright$ Masked Transformer (det./diffusion/FM)
\STATE \textbf{// Stage 3: Decode to signal space}
\FOR{each gap day $g \in \{1, \dots, G\}$}
  \STATE $\hat{\mathbf{x}}_{g} \leftarrow
    \mathrm{Dec}_{\omega}\bigl(\mathrm{proj}(\hat{\mathbf{z}}_g),\, \mathbf{c}_{g}\bigr)$
    \hfill $\triangleright$ Hourly-conditioned decoder
\ENDFOR
\STATE \textbf{// Stage 4: Adaptive conformal bands}
\STATE Initialise $\alpha_0 \leftarrow \alpha$
\FOR{each step $t$ in calibration + inference window}
  \STATE $\hat{q}_t \leftarrow \mathrm{Quantile}_{1-\alpha_t}(\text{residuals})$
  \STATE $(L_t, U_t) \leftarrow (\hat{x}_t - \hat{q}_t,\; \hat{x}_t + \hat{q}_t)$
  \STATE $\alpha_{t+1} \leftarrow \alpha_t + \gamma(\alpha - \mathbf{1}\{y_t \notin [L_t, U_t]\})$
\ENDFOR
\RETURN $\hat{\mathbf{X}}_{\mathrm{gap}},\; (L_t, U_t)$
\end{algorithmic}
\end{algorithm}

\section{Boundary Smoothness Analysis}
\label{app:boundary_smoothness}

A common artefact in gap imputation is discontinuity at the transition
between observed context and the imputed region.  We measure this via
the normalised second-order finite difference at the gap entry:
$D = |f_1 - 2c_{24} + c_{23}| / (\max(y) - \min(y))$, where $c_k$
denotes context hour $k$ and $f_1$ the first imputed hour.  Across
nine datasets, SPLICE achieves a mean boundary discontinuity of
0.158 on the $[0,1]$ scale.  This is notably low on large-volume
feeds: PepcoCOM (0.016), RIInd1014 (0.035), WCMA1010res (0.018) and
WCMAnatGridRes1004 (0.055), all below 0.06, indicating near-seamless
transitions.  Higher values on WCMA1011lig (0.517) and UES\_NH\_Med
(0.372) reflect the inherent volatility of these small, noisy loads
rather than model deficiency.  The latent-space architecture
encourages smooth transitions because the bridge operates on
slowly-varying JEPA embeddings rather than raw hourly values, providing
a natural inductive bias toward continuity.

\section{Case Study: Zero-Inflated Lighting Load (WCMA1011lig)}
\label{app:wcma1011lig}

WCMA1011lig is a street-lighting sub-load with 46\% zero-valued hours.
Despite receiving no special architectural treatment, SPLICE achieves
97.2\% ACI coverage with intervals 12\% narrower than CQR.  The MSE
gap to SAITS (0.216 vs.\ 0.027) is attributable to the 64-dimensional
bridge embedding losing fine-grained on/off transitions, yet
WCMA1011lig exhibits the smallest diffusion-bridge MSE degradation
(+18.1\%, Table~\ref{tab:diffusion}) and the largest decoder
enhancement gain ($-14.3\%$, Table~\ref{tab:decoder_ablation}),
confirming that the bimodal latent distribution benefits from both
stochastic sampling and Load-weighted decoding.  A lightweight
mixture-model gating head could close the remaining gap while
preserving the full pipeline's uncertainty quantification.


\section{Additional Tables and Figures}
\label{app:additional_floats}

\begin{table}[H]
  \caption{Qualitative comparison of imputation approaches.
  \ding{51} = supported; \ding{55} = not supported.}
  \label{tab:method_comparison}
  \centering\small
  \begin{tabular}{lccccc}
    \toprule
    Method & Latent & Generative & Flow & Conformal & Multi-mode \\
    \midrule
    BRITS & \ding{55} & \ding{55} & \ding{55} & \ding{55} & \ding{55} \\
    SAITS & \ding{55} & \ding{55} & \ding{55} & \ding{55} & \ding{55} \\
    CSDI  & \ding{55} & \ding{51} & \ding{55} & \ding{55} & \ding{55} \\
    SSSD  & \ding{55} & \ding{51} & \ding{55} & \ding{55} & \ding{55} \\
    GP-VAE & \ding{51} & \ding{51} & \ding{55} & \ding{55} & \ding{55} \\
    \textbf{SPLICE} & \ding{51} & \ding{51} & \ding{51} & \ding{51} & \ding{51} \\
    \bottomrule
  \end{tabular}
\end{table}

\begin{table}[H]
  \caption{Summary of the thirteen evaluation datasets. The nine
  proprietary datasets span 2015--2025; the three UCI datasets span
  2011--2014; ETTh1 spans 2016--2018.  All are at hourly resolution.}
  \label{tab:datasets}
  \centering
  \small
  \begin{tabular}{llrll}
    \toprule
    Dataset & Type & Rows & Load Range & Load Mean \\
    \midrule
    PepcoCOM          & Commercial  & 48{,}192 & 0--255k     & 120.6k \\
    RICom1013         & Commercial  & 89{,}832 & 92k--993k   & 329.7k \\
    RIInd1014         & Industrial  & 89{,}088 & 0--120k     & 28.6k  \\
    RIRes1012         & Residential & 89{,}832 & 14.7k--280k & 111.5k \\
    SEMAResNstar1009  & Residential & 89{,}829 & 0--378      & 89.5   \\
    UES\_NH\_Med      & Mixed       & 86{,}807 & 4.7--48     & 20.3   \\
    WCMA1010res       & Residential & 84{,}744 & 25.9--265   & 106.7  \\
    WCMA1011lig       & Lighting    & 84{,}744 & 0--11.2     & 1.1    \\
    WCMAnatGridRes1004& Residential & 89{,}088 & 0--623k     & 193.3k \\
    \midrule
    UCI\_Elec\_MT\_001 & Residential & 35{,}064 & 0--10.6     & 1.6    \\
    UCI\_Elec\_MT\_150 & Residential & 35{,}064 & 0--102      & 34.9   \\
    UCI\_Elec\_MT\_320 & Residential & 35{,}064 & 0--384      & 122.3  \\
    \midrule
    ETTh1             & Transformer & 17{,}400 & 0--72       & 30.6   \\
    \bottomrule
  \end{tabular}
\end{table}

\begin{table}[H]
  \caption{Coverage and interval width: static CQR vs.\ ACI on the
  nine proprietary datasets. Coverage values closer to 0.95 are
  better; narrower widths are better at equal coverage.}
  \label{tab:conformal}
  \centering
  \small
  \begin{tabular}{lrrrrr}
    \toprule
    Dataset & CQR Cov. & CQR Width & ACI Cov. & ACI Width & $\alpha_T$ \\
    \midrule
    PepcoCOM & 0.940 & 1.486 & 0.942 & 1.542 & 0.041 \\
    RICom1013 & 0.926 & 1.488 & 0.945 & 1.533 & 0.038 \\
    RIInd1014 & 0.860 & 1.576 & 0.935 & 1.562 & 0.016 \\
    RIRes1012 & 0.931 & 1.583 & 0.946 & 1.586 & 0.040 \\
    SEMAResNstar1009 & 0.944 & 1.641 & 0.938 & 1.648 & 0.023 \\
    UES\_NH\_Med & 0.899 & 1.837 & 0.934 & 1.949 & 0.015 \\
    WCMA1010res & 0.923 & 1.417 & 0.943 & 1.495 & 0.035 \\
    WCMA1011lig & 0.972 & 2.569 & 0.954 & 2.912 & 0.059 \\
    WCMAnatGridRes1004 & 0.926 & 1.627 & 0.939 & 1.629 & 0.023 \\
    \bottomrule
  \end{tabular}
\end{table}

\begin{figure}[H]
  \centering
  \begin{subfigure}[b]{0.48\textwidth}
    \includegraphics[width=\linewidth]{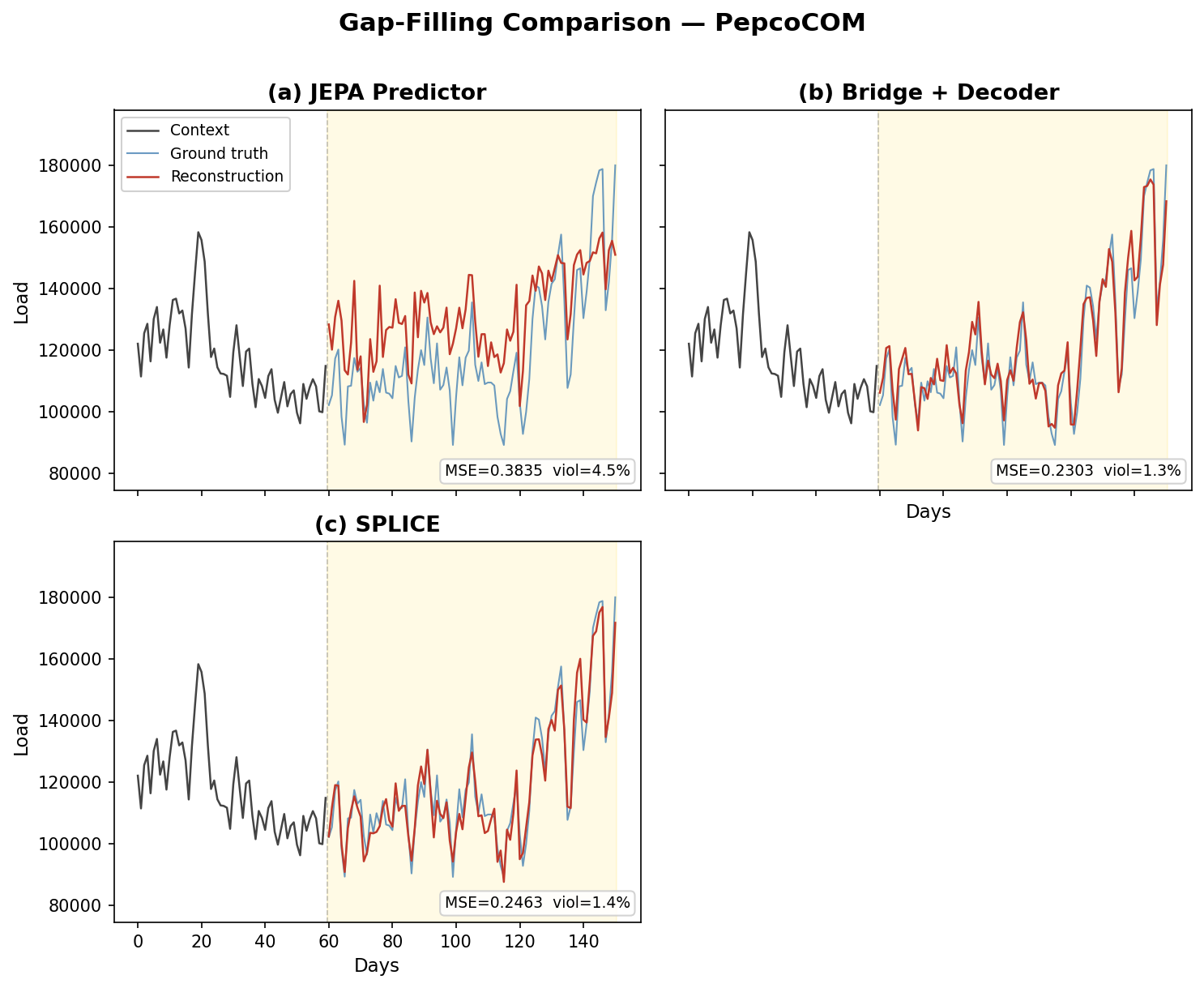}
  \end{subfigure}\hfill
  \begin{subfigure}[b]{0.48\textwidth}
    \includegraphics[width=\linewidth]{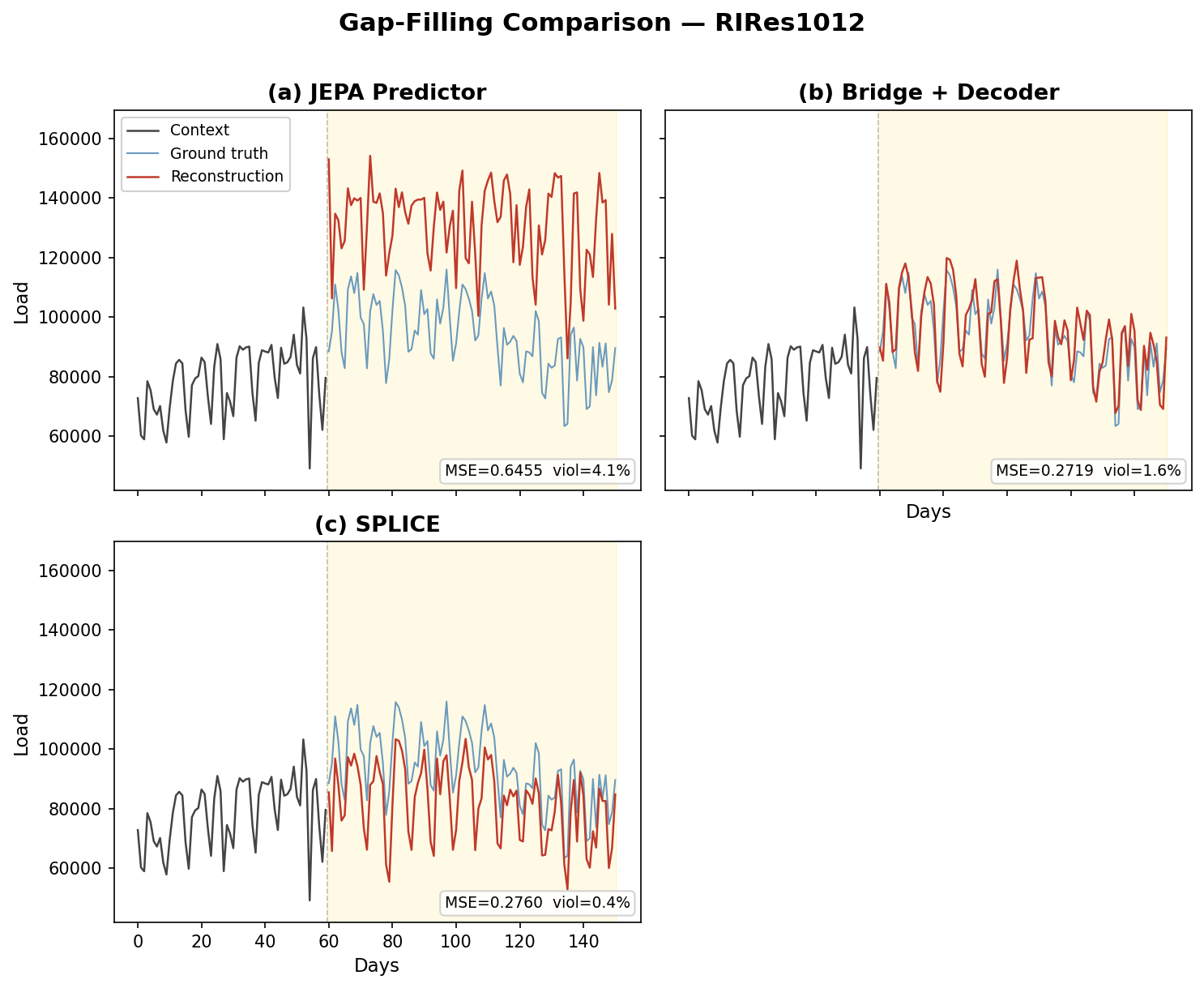}
  \end{subfigure}
  \\[6pt]
  \begin{subfigure}[b]{0.48\textwidth}
    \includegraphics[width=\linewidth]{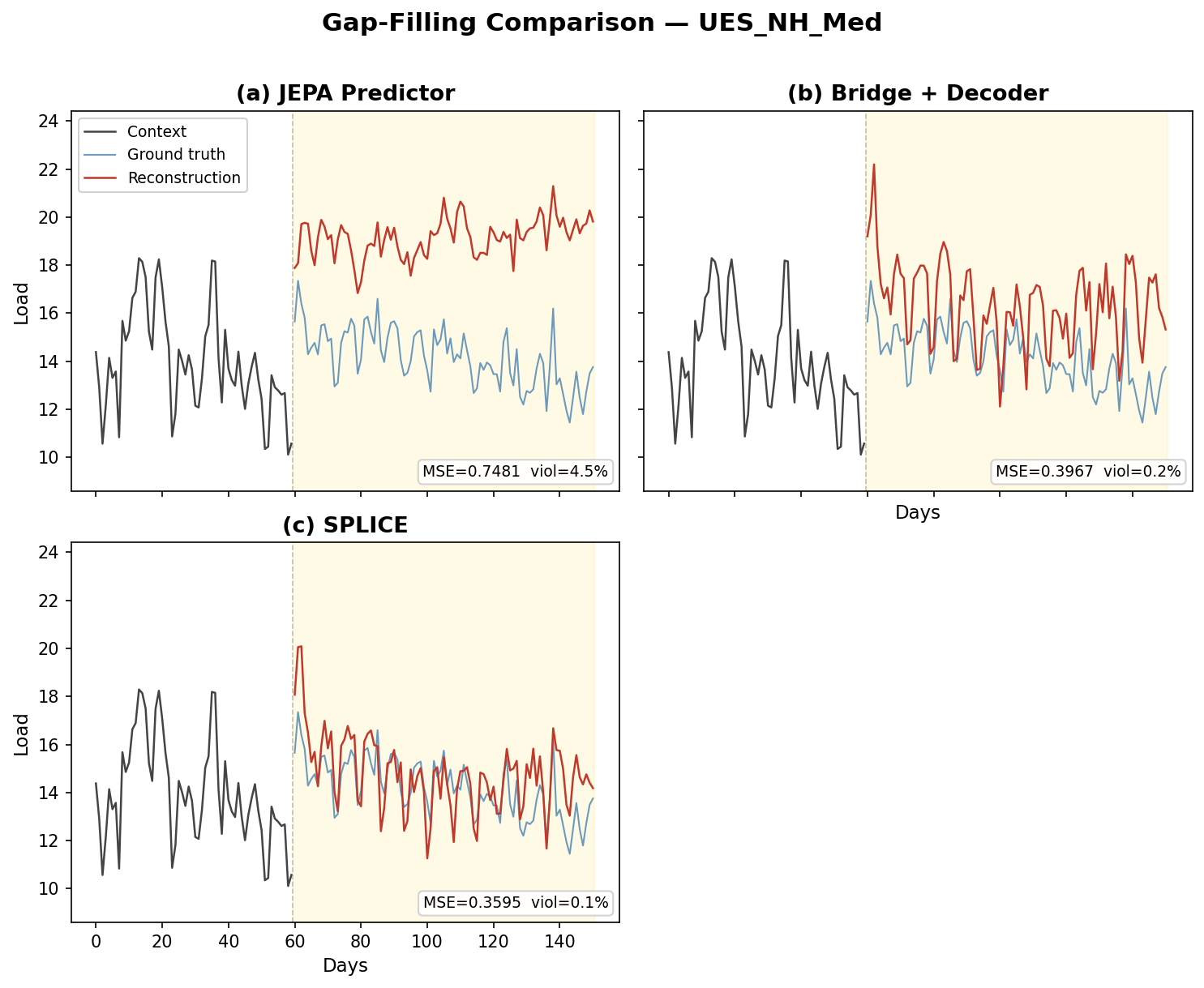}
  \end{subfigure}
  \caption{Qualitative gap-filling comparison for three
  representative datasets spanning commercial (PepcoCOM), residential
  (RIRes1012), and volatile mixed (UES\_NH\_Med) load
  profiles. Each sub-panel shows 60~days of observed context followed
  by 91~days of reconstructed load in the gap.
  (a)~JEPA predictor, (b)~Bridge+Decoder,
  (c)~SPLICE (full pipeline). Constraint violation rates are
  shown per panel.}
  \label{fig:gap_comparison}
\end{figure}

\begin{table}[H]
  \caption{Computational cost (single dataset, NVIDIA RTX A4000).}
  \label{tab:compute}
  \centering
  \small
  \begin{tabular}{lrrr}
    \toprule
    Component & Parameters & Training & Inference (91-day gap) \\
    \midrule
    JEPA encoder + predictor & 932K & $\sim$5 min & 2.4\,ms \\
    Masked Transformer bridge & 828K & $\sim$1--2 min & 3.6\,ms \\
    Hourly-conditioned decoder & 123K & $\sim$10\,s & 0.5\,ms \\
    Flow-matching bridge (5 steps) & 1.36M & $\sim$3--5 min & 2.5\,ms \\
    DDIM diffusion bridge (50 steps) & 1.36M & $\sim$5--8 min & 25\,ms \\
    \midrule
    \textbf{Full pipeline (det.\ bridge)} & \textbf{1.9M} & $\sim$\textbf{8 min} & \textbf{6.5\,ms} \\
    \textbf{Full pipeline (FM-A)} & \textbf{3.2M} & $\sim$\textbf{12 min} & \textbf{9.0\,ms} \\
    \bottomrule
  \end{tabular}
\end{table}

\begin{table}[H]
  \caption{All-feature gap-frame MSE (normalised $[0,1]$, daily decoder) on the nine proprietary datasets. Internal ablation; lower is better.}
  \label{tab:mse}
  \centering
  \small
  \begin{tabular}{lrrrrr}
    \toprule
    Dataset & VAE+Br. & JEPA & Full & $\Delta_{\text{JEPA}}$ (\%) & $\Delta_{\text{VAE}}$ (\%) \\
    \midrule
    PepcoCOM & 0.2576 & 0.3648 & 0.2322 & -36.3 & -9.9 \\
    RICom1013 & 0.2912 & 0.5682 & 0.2587 & -54.5 & -11.2 \\
    RIInd1014 & 0.3991 & 0.6096 & 0.2558 & -58.0 & -35.9 \\
    RIRes1012 & 0.3318 & 0.7406 & 0.2498 & -66.3 & -24.7 \\
    SEMAResNstar1009 & 0.1943 & 0.4748 & 0.1575 & -66.8 & -18.9 \\
    UES\_NH\_Med & 0.5744 & 0.6830 & 0.2987 & -56.3 & -48.0 \\
    WCMA1010res & 0.3176 & 0.4197 & 0.1889 & -55.0 & -40.5 \\
    WCMA1011lig & 0.3093 & 0.3974 & 0.3113 & -21.7 & +0.6 \\
    WCMAnatGridRes1004 & 0.3736 & 0.6534 & 0.2350 & -64.0 & -37.1 \\
    \midrule
    Mean & --- & --- & --- & -53.2 & -25.1 \\
    \bottomrule
  \end{tabular}
\end{table}

\begin{figure}[H]
  \centering
  \includegraphics[width=0.95\linewidth]{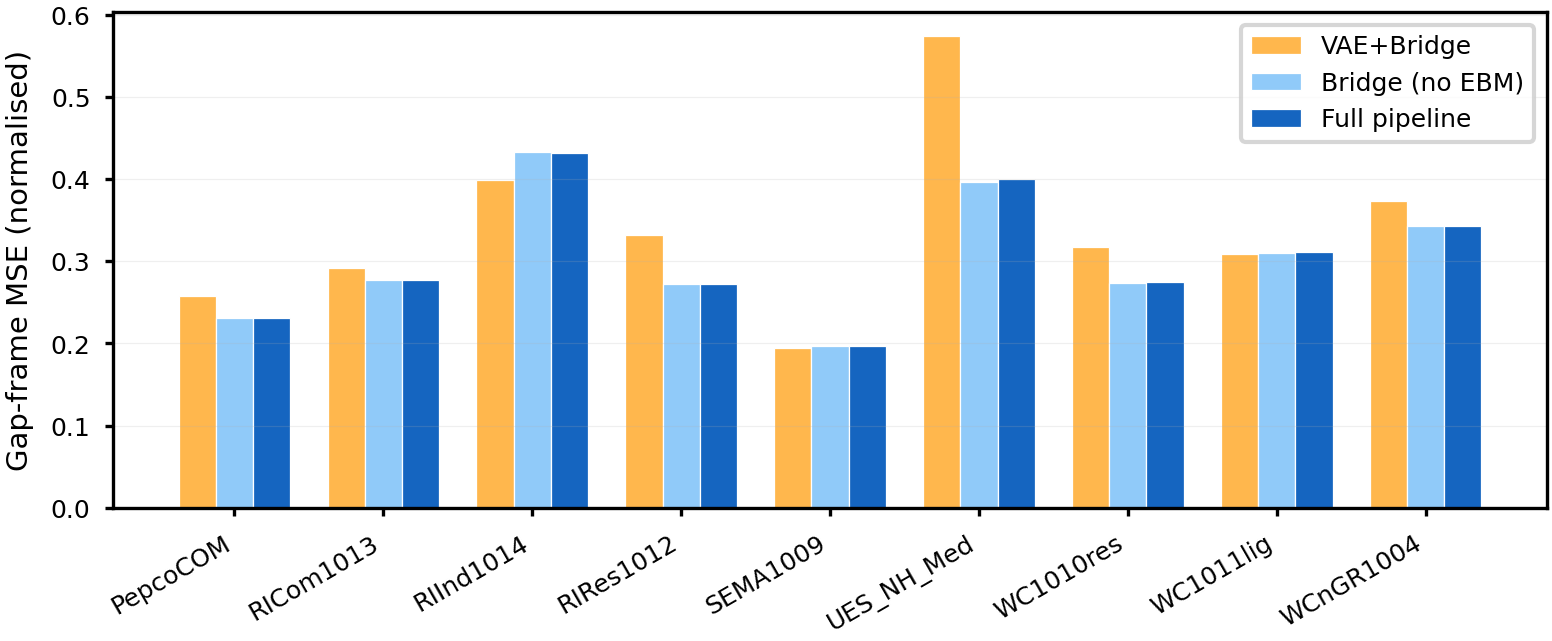}
  \caption{Gap-frame MSE comparison across the nine proprietary datasets. Orange: legacy
  VAE+Bridge. Light blue: JEPA-only decoder. Dark blue: full pipeline.}
  \label{fig:mse_bars}
\end{figure}

\begin{table}[H]
  \caption{MAPE~(\%) vs.\ external imputation baselines (original
           scale). Lower is better; bold marks the best per dataset.
           $\dagger$Near-zero loads inflate MAPE for all methods.
           $\ddagger$SPLICE MAPE not computed for ETTh1.}
  \label{tab:ext-baselines-mape}
  \centering
  \scriptsize
  \begin{tabular}{lrrrrrr}
    \toprule
    Dataset & Seasonal & BRITS & SAITS & CSDI & TimesNet & SPLICE \\
    \midrule
    PepcoCOM           & 9.4  & 15.9  & 12.6  & 27.0  & 31.3  & \textbf{4.4} \\
    RICom1013          & 23.8 & 22.7  & 27.4  & 36.7  & 37.0  & \textbf{22.4} \\
    RIInd1014          & \textbf{20.4} & 48.0  & 34.8  & 46.0  & 53.3  & 46.0 \\
    RIRes1012          & 52.1 & \textbf{25.4} & 42.3  & 52.2  & 58.1  & 29.0 \\
    SEMAResNstar1009   & 86.6 & \textbf{20.2} & 37.1  & 82.4  & 114.1 & 23.8 \\
    UES\_NH\_Med        & 112.0 & 123.1 & 104.8 & 81.5  & 193.8 & \textbf{71.7} \\
    WCMA1010res        & 23.5 & 21.1  & 16.9  & 39.9  & 40.9  & \textbf{14.3} \\
    WCMA1011lig$^\dagger$ & 329.2 & 362.3 & 216.7 & \textbf{190.7} & 397.8 & --- \\
    WCMAnatGridRes1004 & 36.7 & 17.7  & 17.2  & 50.4  & 28.8  & \textbf{7.7} \\
    \midrule
    UCI\_Elec\_MT\_001$^\dagger$ & 209.7 & 109.7 & 67.9  & 422.2 & 410.8 & \textbf{71.7} \\
    UCI\_Elec\_MT\_150   & 28.9 & 15.7  & 10.5  & 43.1  & 309.6 & \textbf{7.0} \\
    UCI\_Elec\_MT\_320   & \textbf{12.7} & 16.3  & 22.7  & 40.1  & 35.1  & 9.7 \\
    ETTh1$^{\dagger\ddagger}$ & 121.0 & 161.8 & 194.3 & 236.5 & 181.4 & --- \\
    \bottomrule
  \end{tabular}
\end{table}

\begin{figure}[H]
  \centering
  \includegraphics[width=\linewidth]{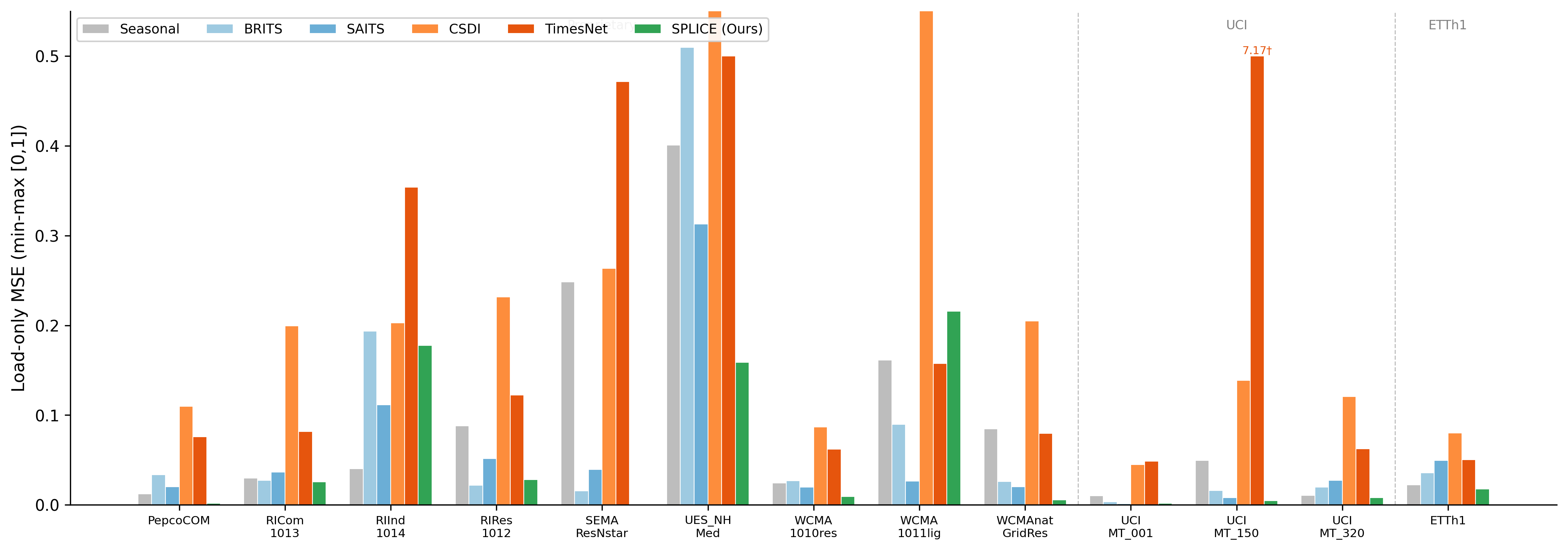}
  \caption{Load-only MSE across thirteen datasets for six imputation
  methods.  SPLICE (green) achieves the lowest MSE on 9 of 12
  non-degenerate datasets (3-seed means).
  Dashed lines separate proprietary, UCI, and ETTh1 groups.
  $\dagger$TimesNet on UCI\_Elec\_MT\_150 diverged (MSE\,=\,7.17; bar capped
  at 0.5).}
  \label{fig:baseline_comparison}
\end{figure}

\begin{figure}[H]
  \centering
  \begin{subfigure}[b]{\linewidth}
    \includegraphics[width=\linewidth]{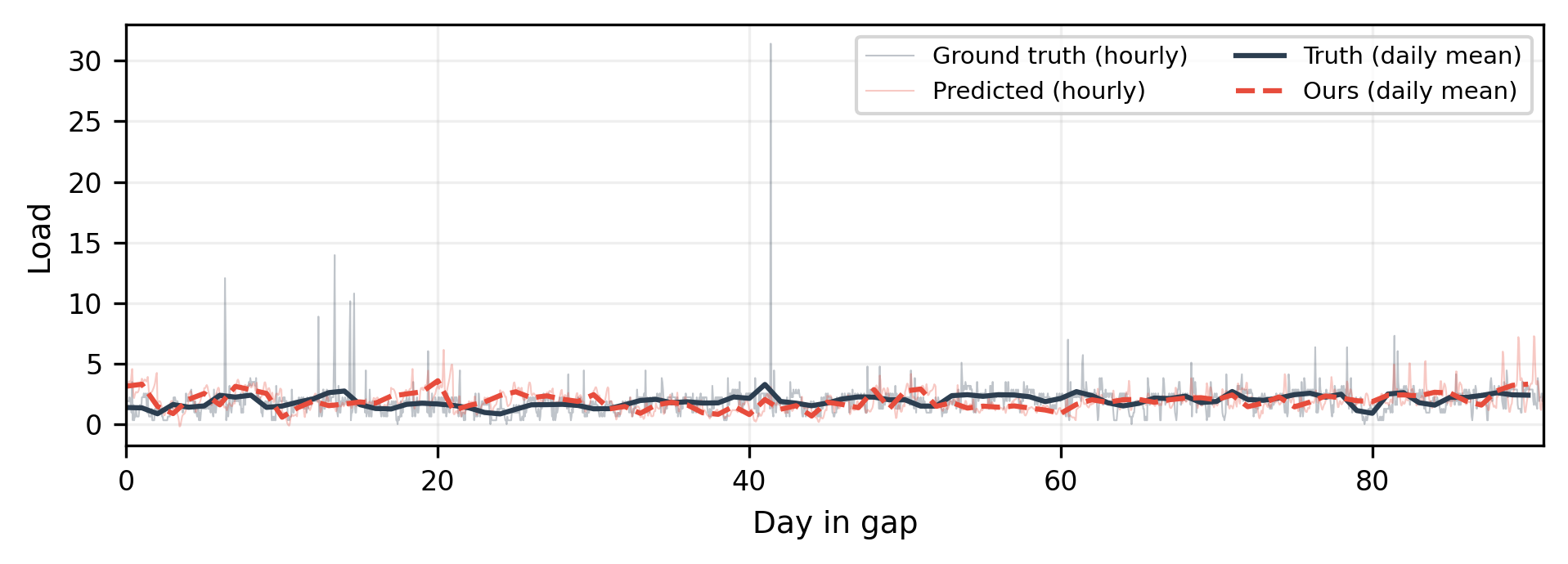}
    \caption{UCI\_Elec\_MT\_001 (low variance, MSE\,=\,0.0019)}
  \end{subfigure}
  \\[4pt]
  \begin{subfigure}[b]{\linewidth}
    \includegraphics[width=\linewidth]{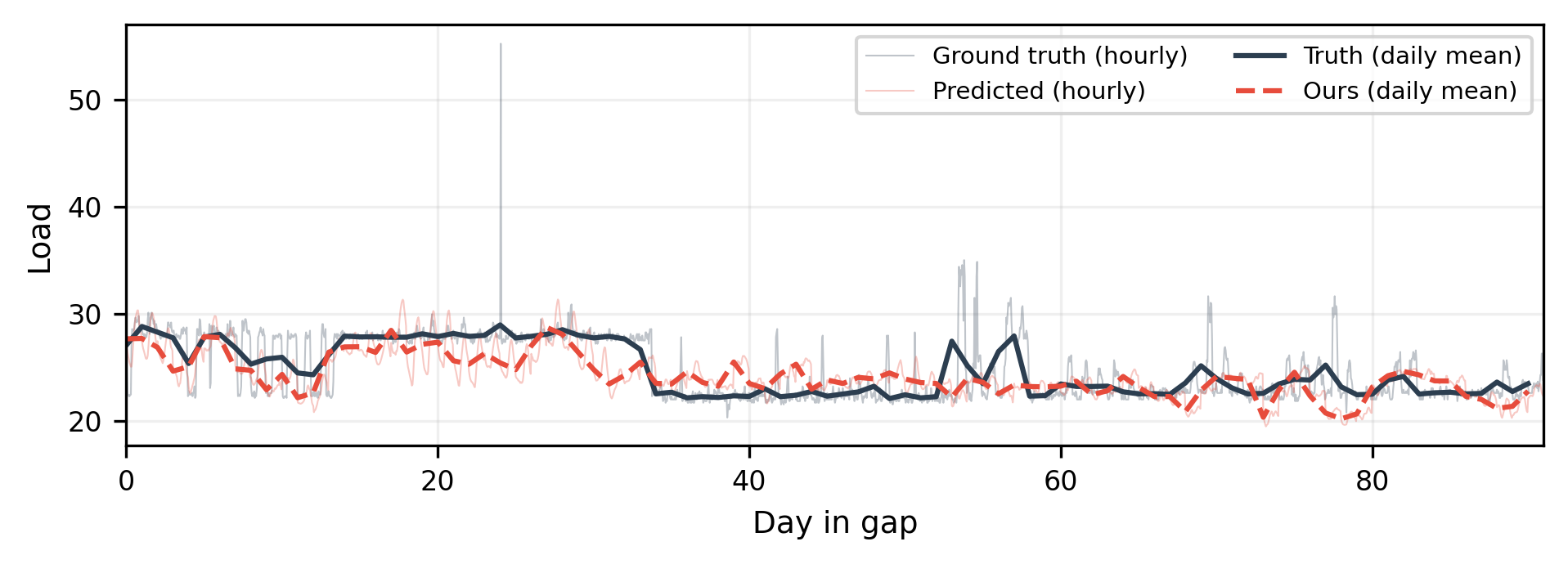}
    \caption{UCI\_Elec\_MT\_150 (medium variance, MSE\,=\,0.0049)}
  \end{subfigure}
  \\[4pt]
  \begin{subfigure}[b]{\linewidth}
    \includegraphics[width=\linewidth]{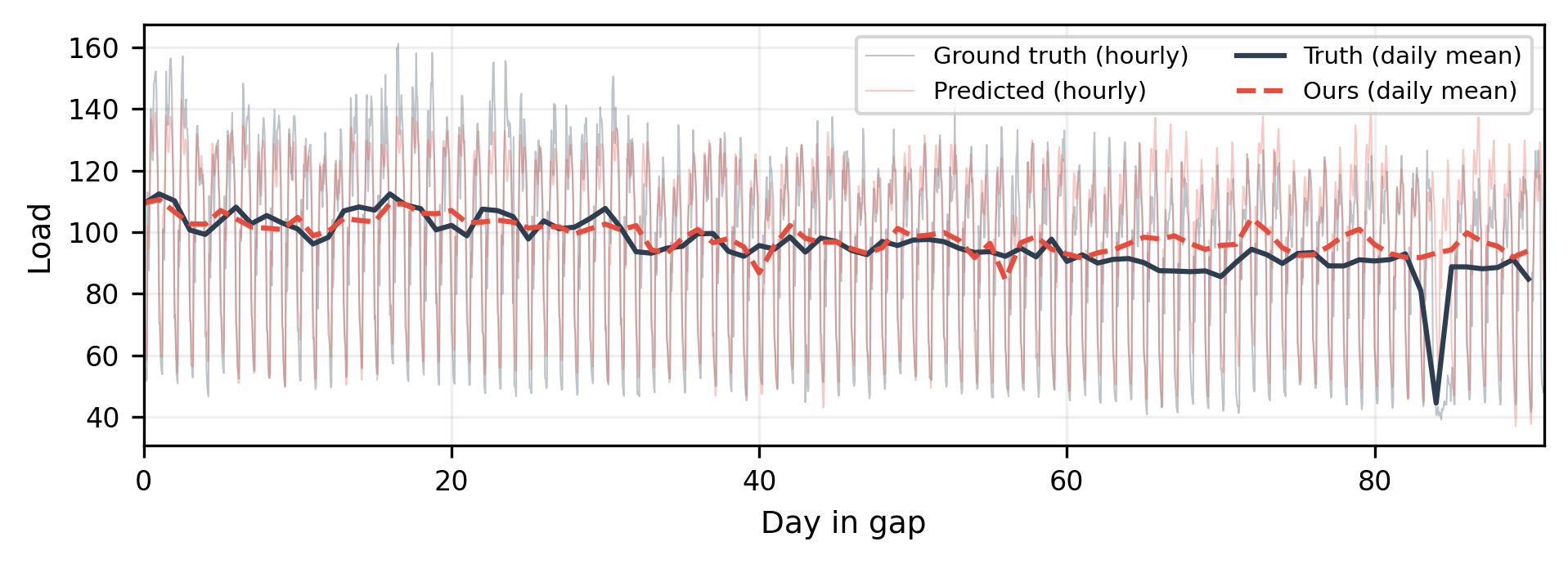}
    \caption{UCI\_Elec\_MT\_320 (high variance, MSE\,=\,0.0082)}
  \end{subfigure}
  \caption{Bridge-decoded gap reconstruction on the three UCI Electricity
  datasets.  Faded traces show hourly ground truth (dark) and predictions
  (red); bold lines show daily means.  Load-only MSE values reported on
  the min-max $[0,1]$ normalised scale.}
  \label{fig:uci_reconstruction}
\end{figure}

\begin{figure}[H]
  \centering
  \begin{subfigure}[b]{0.32\linewidth}
    \includegraphics[width=\linewidth]{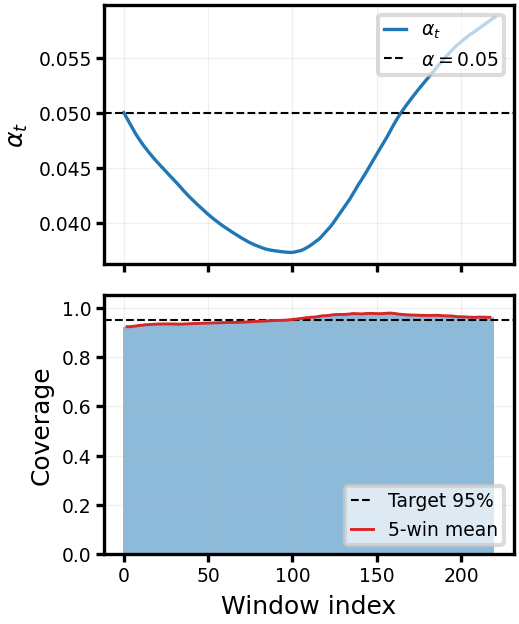}
    \caption{WCMA1011lig (low var.)}
  \end{subfigure}
  \hfill
  \begin{subfigure}[b]{0.32\linewidth}
    \includegraphics[width=\linewidth]{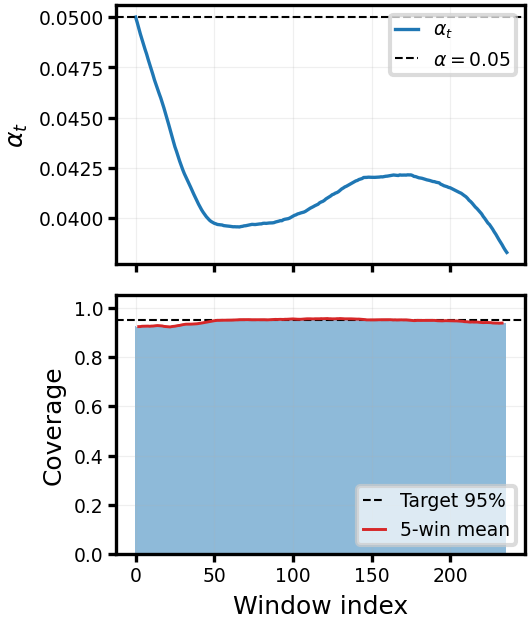}
    \caption{RICom1013 (med.\ var.)}
  \end{subfigure}
  \hfill
  \begin{subfigure}[b]{0.32\linewidth}
    \includegraphics[width=\linewidth]{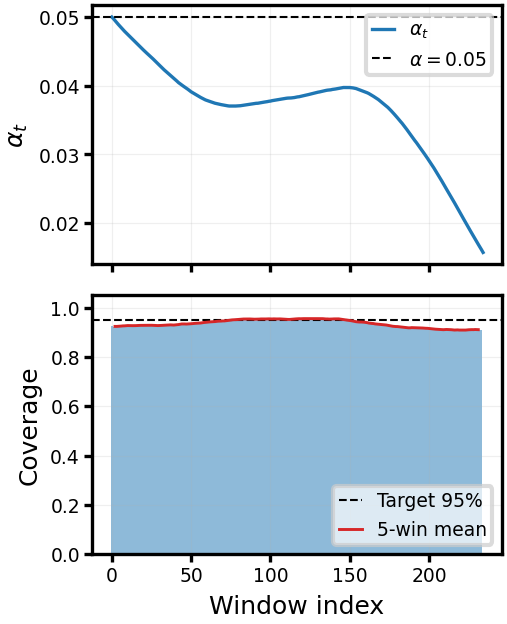}
    \caption{RIInd1014 (high var.)}
  \end{subfigure}
  \caption{Adaptive $\alpha_t$ trajectories for three representative
  proprietary datasets. Dashed line: static $\alpha = 0.05$.}
  \label{fig:aci_trajectories}
\end{figure}

\begin{figure}[H]
  \centering
  \begin{subfigure}[b]{0.92\linewidth}
    \includegraphics[width=\linewidth]{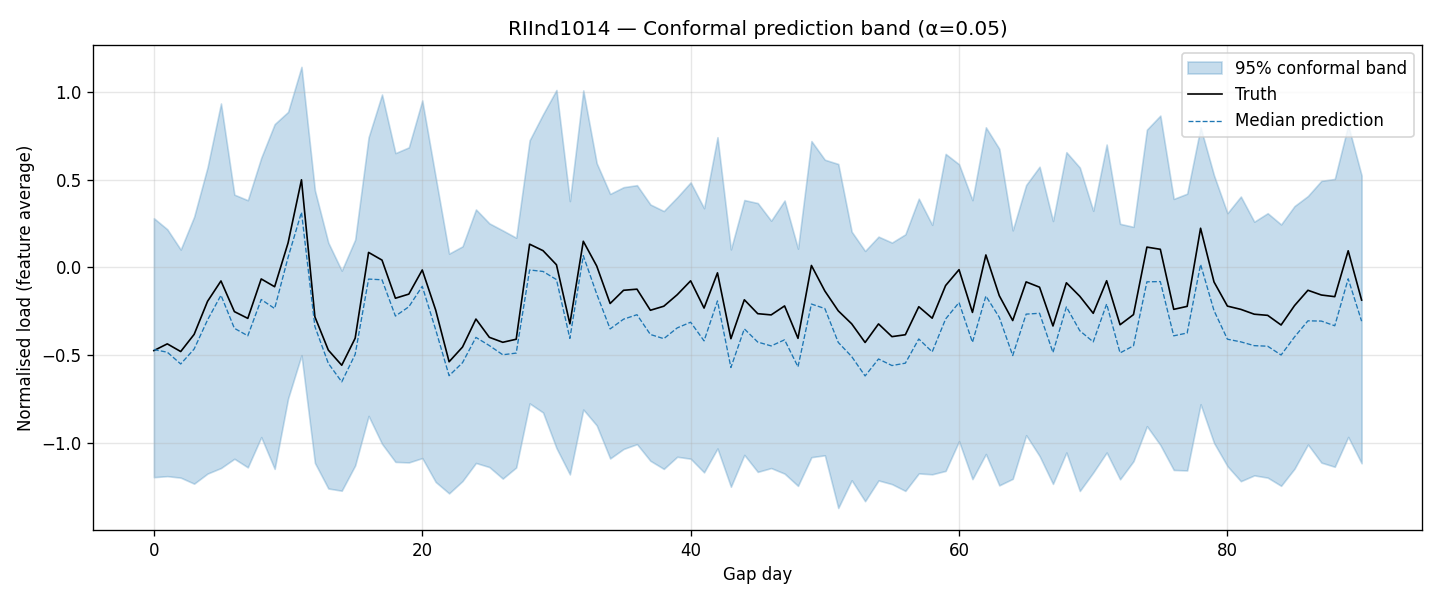}
    \caption{RIInd1014 (high variability)}
  \end{subfigure}
  \\[4pt]
  \begin{subfigure}[b]{0.92\linewidth}
    \includegraphics[width=\linewidth]{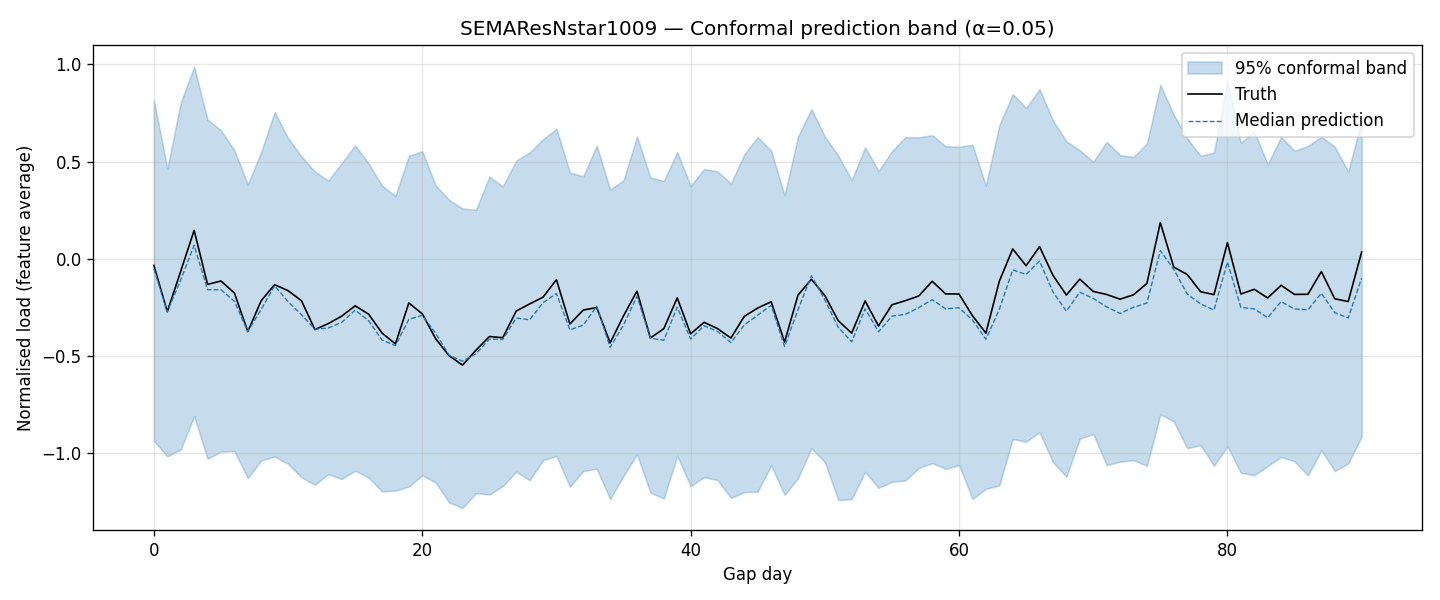}
    \caption{SEMAResNstar1009 (medium variability)}
  \end{subfigure}
  \\[4pt]
  \begin{subfigure}[b]{0.92\linewidth}
    \includegraphics[width=\linewidth]{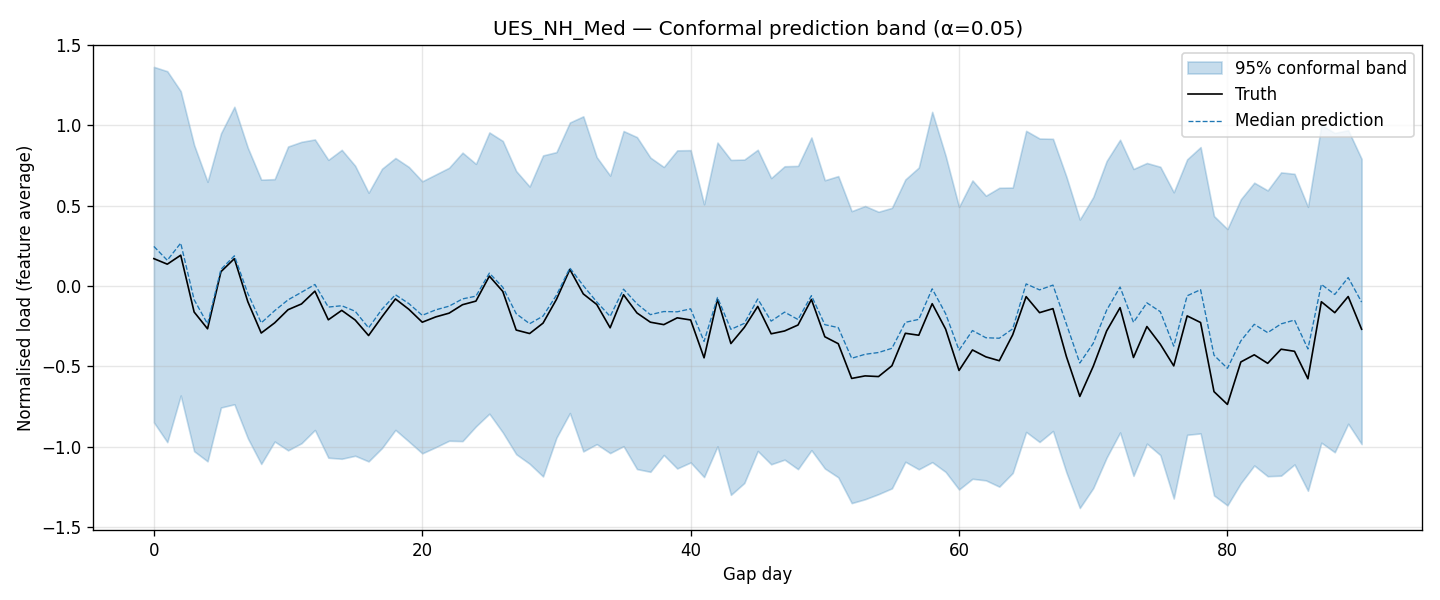}
    \caption{UES\_NH\_Med (high variability)}
  \end{subfigure}
  \caption{Conformal prediction bands (95\% target) for three
  representative proprietary datasets. Shaded region: adaptive conformal
  interval. Solid black: ground truth. Dashed blue: median prediction.}
  \label{fig:conformal_bands}
\end{figure}

\paragraph{CRPS definition.}
The Continuous Ranked Probability Score~\citep{gneiting2007strictly}
measures the compatibility of a predictive CDF $F$ with an observation $y$:
\begin{equation}
  \mathrm{CRPS}(F, y) = \int_{-\infty}^{\infty}
    \bigl(F(x) - \mathbf{1}\{x \ge y\}\bigr)^{2}\, dx .
  \label{eq:crps}
\end{equation}
For an $M$-member ensemble $\{x_1,\dots,x_M\}$ we use the equivalent
energy form:
\begin{equation}
  \mathrm{CRPS} = \frac{1}{M}\sum_{i=1}^{M}|x_i - y|
    - \frac{1}{2M^{2}}\sum_{i=1}^{M}\sum_{j=1}^{M}|x_i - x_j| .
  \label{eq:crps_energy}
\end{equation}

\begin{table}[H]
  \caption{CRPS (lower is better) on $[0,1]$ normalised Load,
  nine proprietary datasets.
  For deterministic baselines, CRPS~$=$~MAE.  SPLICE uses a
  20-member latent ensemble ($\sigma{=}0.15$).
  \textbf{Bold} = best per row.}
  \label{tab:crps}
  \centering\small
  \begin{tabular}{lrrrrrr}
    \toprule
    Dataset & Seasonal & BRITS & SAITS & CSDI & TimesNet & SPLICE \\
    \midrule
    PepcoCOM          & 0.078 & 0.137 & 0.101 & 0.179 & 0.232 & \textbf{0.015} \\
    RICom1013         & 0.140 & 0.132 & 0.159 & 0.290 & 0.214 & \textbf{0.119} \\
    RIInd1014         & \textbf{0.174} & 0.412 & 0.304 & 0.350 & 0.420 & 0.351 \\
    RIRes1012         & 0.277 & \textbf{0.117} & 0.198 & 0.321 & 0.290 & \textbf{0.117} \\
    SEMAResNstar1009  & 0.459 & \textbf{0.100} & 0.171 & 0.411 & 0.560 & 0.232 \\
    UES\_NH\_Med      & 0.603 & 0.669 & 0.526 & 0.542 & 1.035 & \textbf{0.331} \\
    WCMA1010res       & 0.126 & 0.132 & 0.108 & 0.236 & 0.208 & \textbf{0.058} \\
    WCMA1011lig       & 0.249 & 0.254 & \textbf{0.093} & 0.737 & 0.288 & 0.198 \\
    WCMAnatGridRes1004 & 0.263 & 0.129 & 0.115 & 0.352 & 0.228 & \textbf{0.029} \\
    \midrule
    Mean              & 0.263 & 0.231 & 0.197 & 0.380 & 0.386 & \textbf{0.161} \\
    \bottomrule
  \end{tabular}
\end{table}

\begin{figure}[H]
  \centering
  \includegraphics[width=\linewidth]{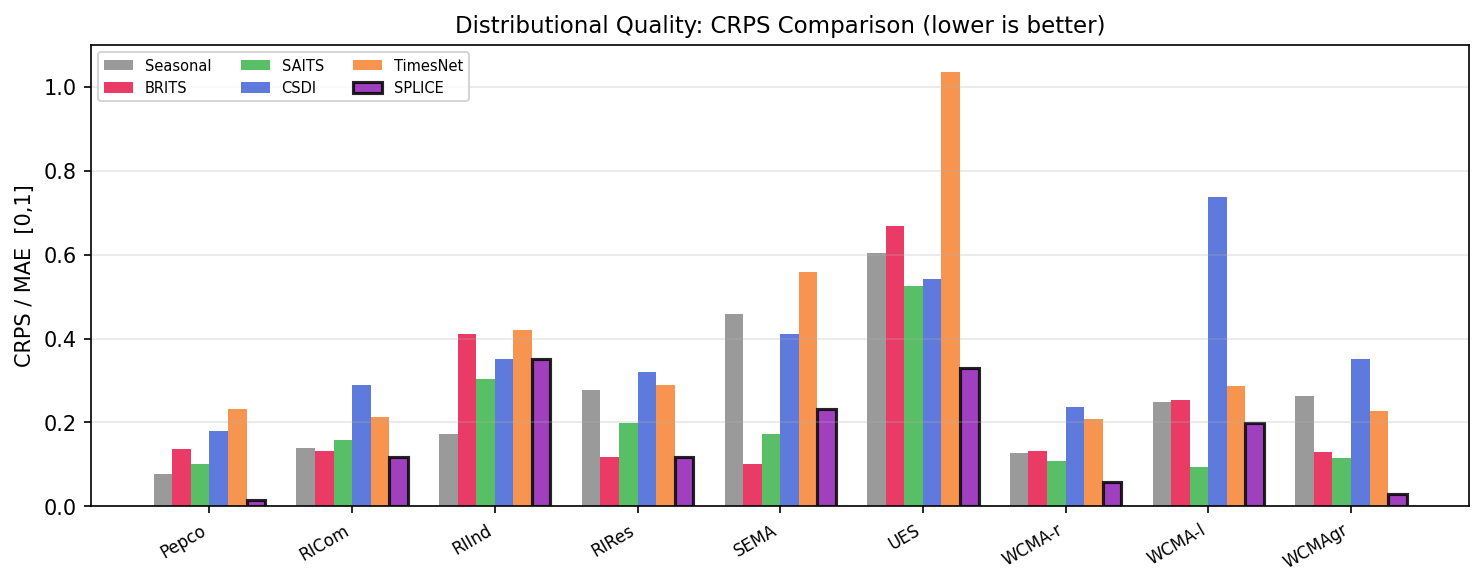}
  \caption{CRPS / MAE comparison across nine proprietary datasets.
  SPLICE-ensemble (purple, black border) achieves the lowest score on
  6 of 9 datasets, with the best average (0.161 vs.\ 0.197 for SAITS).}
  \label{fig:crps_comparison}
\end{figure}

\begin{figure}[H]
  \centering
  \includegraphics[width=\linewidth]{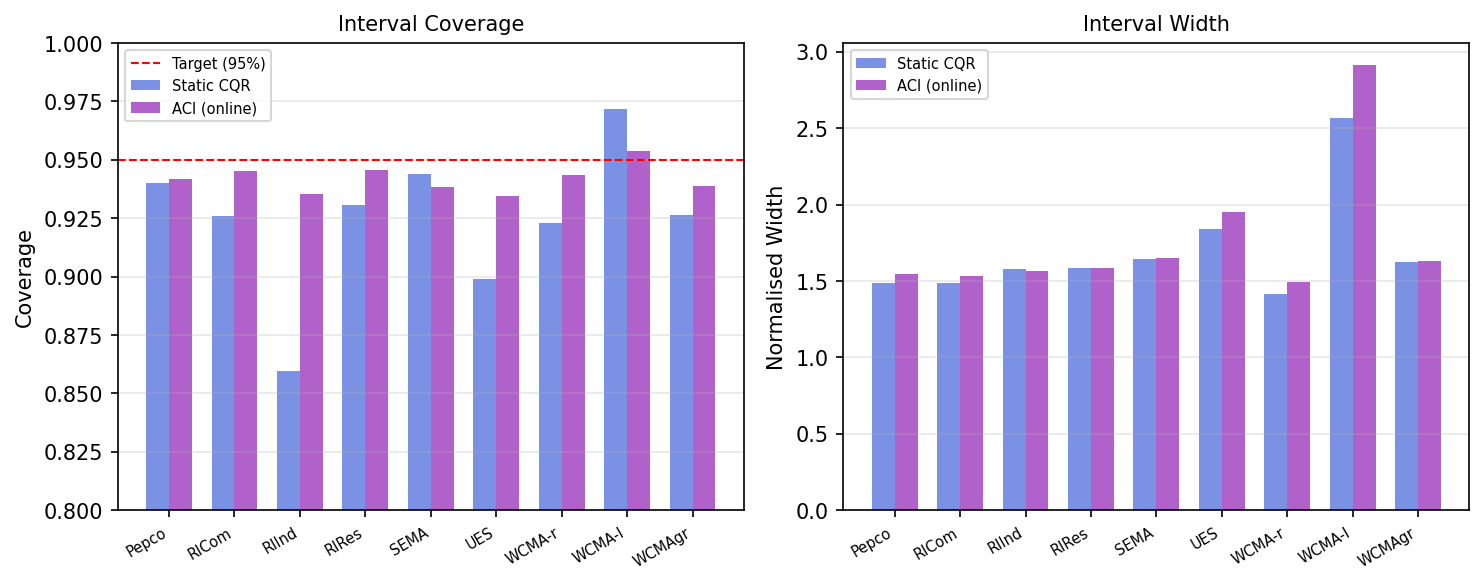}
  \caption{ACI vs.\ static CQR: coverage (left) and normalised width
  (right) on the nine proprietary datasets.  ACI (purple) maintains
  near-95\% coverage on all nine feeds,
  including RIInd1014 and UES\_NH\_Med where CQR under-covers.
  Dashed red line: target 95\%.}
  \label{fig:coverage_comparison}
\end{figure}

\begin{figure}[H]
  \centering
  \includegraphics[width=0.85\linewidth]{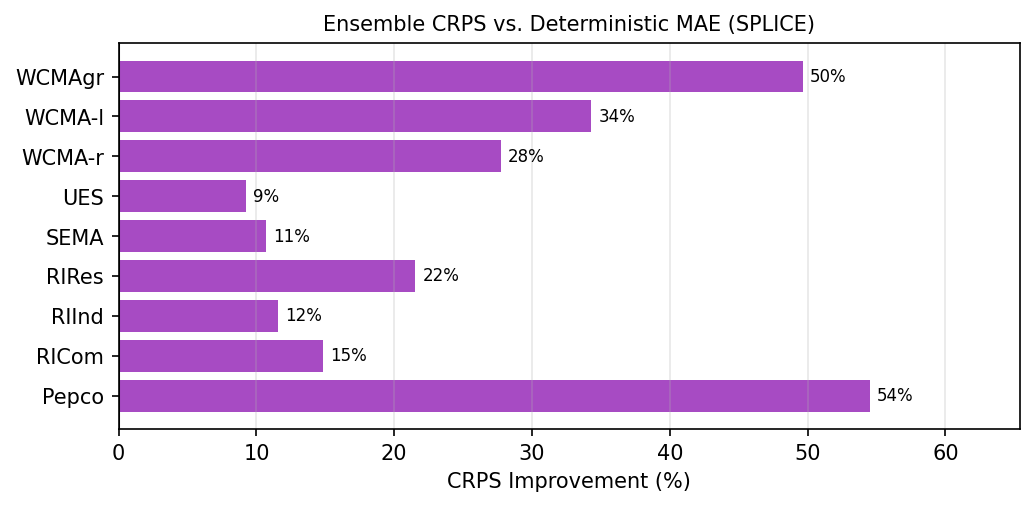}
  \caption{CRPS improvement from latent ensemble ($M{=}20$, $\sigma{=}0.15$)
  over deterministic prediction on the nine proprietary datasets.
  The ensemble reduces CRPS by 9--55\%, with the largest gains on
  stable, high-volume feeds (PepcoCOM, WCMAnatGridRes1004).}
  \label{fig:crps_improvement}
\end{figure}


\section{Hyperparameters}
\label{app:hyperparams}

\begin{table}[H]
  \caption{Architecture and training hyperparameters.}
  \label{tab:hyperparams}
  \centering
  \small
  \begin{tabular}{llr}
    \toprule
    Component & Parameter & Value \\
    \midrule
    \multirow{4}{*}{JEPA Encoder}
      & repr\_dim / $d_{\mathrm{model}}$ / heads / layers & 64 / 128 / 4 / 4 \\
      & LR / weight decay / patience & 3e-4 / 1e-4 / 50 \\
      & EMA decay & 0.996 \\
      & Parameters & $\approx$932K \\
    \midrule
    \multirow{5}{*}{Diffusion Bridge}
      & $d_{\mathrm{model}}$ / heads / layers & 128 / 4 / 6 \\
      & Timesteps / DDIM steps / $\eta$ & 1000 / 50 / 0.0 \\
      & Schedule / min-SNR $\gamma$ & cosine / 5.0 \\
      & LR / patience / $p_{\mathrm{uncond}}$ & 2e-4 / 40 / 0.15 \\
      & Parameters & $\approx$1.36M \\
    \midrule
    \multirow{4}{*}{Flow-Matching Bridge}
      & Backbone & shared with Diffusion \\
      & Euler steps / solver & 5 / Euler \\
      & LR / epochs / patience & 2e-4 / 300 / 40 \\
      & Parameters & $\approx$1.36M \\
    \midrule
    \multirow{2}{*}{ACI}
      & $\alpha$ / $\gamma$ / cal\_fraction & 0.05 / 0.01 / 0.5 \\
      & $n_{\mathrm{cal}}$ / $n_{\mathrm{inf}}$ & 50 / 20 \\
    \midrule
    DLinear & kernel\_size / LR / batch & 25 / 1e-3 / 256 \\
    \midrule
    \multirow{2}{*}{TFT-lite}
      & $d_{\mathrm{model}}$ / heads / layers & 32 / 4 / 2 \\
      & Quantiles & [0.1, 0.5, 0.9] \\
    \midrule
    \multirow{4}{*}{Hourly Decoder}
      & $z_{\mathrm{proj}}$ / hour MLP & $64 \to 256$ / [256, 128] \\
      & Load weight / noise $\sigma$ & 5.0 / 0.15 \\
      & LR / patience & 1e-3 / 20 \\
      & Parameters & $\approx$123K \\
    \bottomrule
  \end{tabular}
\end{table}


\section{Gap-Length Sensitivity}
\label{sec:gap_length}

The primary evaluation in Section~\ref{sec:ext-baselines} uses a
91-day gap.  To characterise how performance scales with gap duration,
we retrain all baselines and SPLICE on 7-day and 30-day gaps across
all 13~datasets, using identical train/test splits and the same
min-max $[0,1]$ MSE metric.
Tables~\ref{tab:gap7d} and~\ref{tab:gap30d} report per-dataset
results; Table~\ref{tab:gap_summary} summarises win counts and mean
rank across all three gap lengths.

\begin{table}[H]
  \caption{Load-only gap MSE for 7-day gaps (min-max $[0,1]$).
           Bold marks the best per dataset.}
  \label{tab:gap7d}
  \centering
  \scriptsize
  \begin{tabular}{lrrrrrr|rr}
    \toprule
    Dataset & Seasonal & BRITS & SAITS & CSDI & TimesNet & Bridge & Diff. & FM-A \\
    \midrule
    PepcoCOM           & 0.1034 & 0.2111 & 0.1564 & 0.3053 & 0.2300 & 0.0123 & 0.0068 & \textbf{0.0058} \\
    RICom1013          & 0.1693 & 0.3543 & 0.1922 & 0.5436 & 0.5994 & \textbf{0.0864} & 0.1017 & 0.0930 \\
    RIInd1014          & \textbf{0.1167} & 0.3037 & 0.3753 & 0.6126 & 1.2169 & 0.7060 & 0.3981 & 0.1807 \\
    RIRes1012          & 0.2284 & \textbf{0.0190} & 0.1711 & 0.3791 & 0.3817 & 0.0983 & 0.0792 & 0.0365 \\
    SEMAResNstar1009   & 0.7323 & 0.0845 & 0.0916 & 0.6235 & 2.1617 & \textbf{0.0000} & \textbf{0.0000} & \textbf{0.0000} \\
    UES\_NH\_Med        & 10.274 & 11.106 & 11.865 & 18.232 & 39.868 & 8.2602 & \textbf{2.5018} & 4.0416 \\
    WCMA1010res        & 0.0968 & 0.0084 & 0.0402 & 0.1869 & 0.1767 & 0.0203 & \textbf{0.0048} & 0.0098 \\
    WCMA1011lig        & 0.2428 & 0.1554 & \textbf{0.0194} & 0.9874 & 0.2934 & 0.1124 & 0.0564 & 0.0973 \\
    WCMAnatGridRes1004 & 0.2890 & 0.0787 & 0.0862 & 0.4397 & 0.2056 & \textbf{0.0071} & 0.0182 & 0.0105 \\
    \midrule
    UCI\_Elec\_MT\_001   & 0.0506 & 0.1269 & \textbf{0.0324} & 4.1554 & 1.9656 & 0.1001 & 0.1679 & 0.1460 \\
    UCI\_Elec\_MT\_150   & 0.9769 & 0.8221 & 0.1223 & 73.549 & 418.70 & 0.1973 & 0.3925 & \textbf{0.1103} \\
    UCI\_Elec\_MT\_320   & 0.0751 & 0.0975 & 0.1237 & 0.4453 & 0.2590 & 0.0624 & \textbf{0.0470} & 0.0534 \\
    ETTh1              & \textbf{0.0135} & 0.0437 & 0.0476 & 0.0764 & 0.0663 & 0.0147 & 0.0262 & 0.0389 \\
    \midrule
    Mean   & 1.029 & 1.031 & 1.018 & 7.687 & 35.69 & 0.744 & \textbf{0.298} & 0.373 \\
    Wins   & 2 & 1 & 2 & 0 & 0 & 3 & \textbf{3} & 2 \\
    \bottomrule
  \end{tabular}
\end{table}

\begin{table}[H]
  \caption{Load-only gap MSE for 30-day gaps (min-max $[0,1]$).
           Bold marks the best per dataset.}
  \label{tab:gap30d}
  \centering
  \scriptsize
  \begin{tabular}{lrrrrrr|rr}
    \toprule
    Dataset & Seasonal & BRITS & SAITS & CSDI & TimesNet & Bridge & Diff. & FM-A \\
    \midrule
    PepcoCOM           & 0.0282 & 0.0812 & 0.0496 & 0.1156 & 0.0840 & 0.0054 & \textbf{0.0031} & 0.0049 \\
    RICom1013          & 0.0993 & 0.0818 & \textbf{0.0413} & 0.3167 & 0.3063 & 0.0749 & 0.0611 & 0.0557 \\
    RIInd1014          & \textbf{0.1271} & 0.5185 & 0.2741 & 0.6094 & 1.0665 & 0.7455 & 0.6594 & 0.7570 \\
    RIRes1012          & 0.2395 & \textbf{0.0218} & 0.1539 & 0.5659 & 0.3611 & 0.1250 & 0.0769 & 0.0890 \\
    SEMAResNstar1009   & 0.3887 & 0.0798 & 0.0750 & 0.3195 & 0.6949 & \textbf{0.0000} & \textbf{0.0000} & \textbf{0.0000} \\
    UES\_NH\_Med        & 1.9734 & 1.0166 & 1.8763 & 2.7301 & 5.6227 & 1.4707 & \textbf{0.5312} & 0.7334 \\
    WCMA1010res        & 0.0704 & 0.0094 & 0.0203 & 0.1105 & 0.1328 & 0.0375 & \textbf{0.0073} & 0.0218 \\
    WCMA1011lig        & 0.1848 & 0.1068 & \textbf{0.0216} & 0.7648 & 0.2008 & 0.0609 & 0.0782 & 0.0806 \\
    WCMAnatGridRes1004 & 0.1183 & 0.0238 & 0.0290 & 0.3851 & 0.0860 & 0.0192 & 0.0137 & \textbf{0.0064} \\
    \midrule
    UCI\_Elec\_MT\_001   & 0.0256 & 0.0596 & \textbf{0.0160} & 1.0965 & 1.0627 & 0.0313 & 0.0295 & 0.0257 \\
    UCI\_Elec\_MT\_150   & 0.3614 & 0.2850 & \textbf{0.0370} & 1.3824 & 33.125 & 0.0711 & 0.0398 & 0.1725 \\
    UCI\_Elec\_MT\_320   & 0.0286 & 0.0174 & 0.0883 & 0.2235 & 0.1471 & 0.0195 & \textbf{0.0170} & 0.0233 \\
    ETTh1              & 0.0286 & 0.0321 & 0.0577 & 0.0814 & 0.0699 & 0.0352 & \textbf{0.0276} & 0.0339 \\
    \midrule
    Mean   & 0.283 & 0.180 & 0.224 & 0.571 & 3.243 & 0.237 & \textbf{0.119} & 0.154 \\
    Wins   & 1 & 1 & 4 & 0 & 0 & 1 & \textbf{5} & 1 \\
    \bottomrule
  \end{tabular}
\end{table}

\begin{table}[H]
  \caption{Gap-length sensitivity summary.  Win counts use all
           available datasets per gap (12 for 7\,d/30\,d excl.\
           SEMAResNstar1009; 8 for 91\,d proprietary-only excl.\
           SEMA).  The full 13-dataset enhanced-decoder comparison
           at 91\,d is in Table~\ref{tab:ext-baselines-mse}.
           Mean rank is computed over the same sets.  SPLICE columns
           combine Bridge, Diffusion, and Flow-Matching variants.}
  \label{tab:gap_summary}
  \centering
  \small
  \begin{tabular}{lcccccc}
    \toprule
    & Seasonal & BRITS & SAITS & CSDI & TimesNet & SPLICE \\
    \midrule
    \multicolumn{7}{l}{\emph{Win counts}} \\
    7\,d\;(\!/12)   & 2 & 1 & 2 & 0 & 0 & \textbf{7} \\
    30\,d\;(\!/12) & 1 & 1 & 4 & 0 & 0 & \textbf{6} \\
    91\,d\;(\!/8)  & 1 & 1 & 1 & 0 & 0 & \textbf{5} \\
    Total          & 4 & 3 & 7 & 0 & 0 & \textbf{18} \\
    \midrule
    \multicolumn{7}{l}{\emph{Mean rank ($\downarrow$)}} \\
    7\,d   & 3.50 & 3.50 & 3.42 & 6.50 & 6.33 & \textbf{2.25} \\
    30\,d  & 3.92 & 3.17 & 2.92 & 6.50 & 6.25 & \textbf{2.00} \\
    91\,d  & 3.88 & 3.62 & 3.12 & 6.62 & 5.75 & \textbf{2.00} \\
    \bottomrule
  \end{tabular}
\end{table}

\paragraph{Analysis.}
SPLICE maintains the lowest mean rank at every gap length (2.00--2.25),
accumulating 18/32 total wins (56\%) vs.\ SAITS 7/32 (22\%).
SAITS's advantage collapses at 91\,d (1/8 wins) where observation-space
interpolation must span a longer horizon.  Min-max MSE inflates at
shorter gaps (a normalisation artefact: smaller truth range amplifies
absolute errors), but ordinal rank is invariant.

\section{Downstream Forecasting}
\label{sec:forecasting}

To evaluate the downstream utility of imputed data, we train DLinear
and TFT on the SPLICE-filled series (Table~\ref{tab:forecasting}).

\begin{table}[H]
  \caption{Downstream forecasting quality. Lower is better.}
  \label{tab:forecasting}
  \centering
  \small
  \begin{tabular}{lrrrr}
    \toprule
    Dataset & DLinear WMAPE (\%) & DLinear MAE & TFT WMAPE (\%) & TFT MAE \\
    \midrule
    PepcoCOM & 3.52 & 4185.5 & 3.40 & 4049.6 \\
    RICom1013 & 8.22 & 22340.0 & 8.08 & 21985.2 \\
    RIInd1014 & 9.59 & 1681.4 & 9.45 & 1657.8 \\
    RIRes1012 & 8.53 & 6892.4 & 7.62 & 6157.6 \\
    SEMAResNstar1009 & 8.34 & 5.0 & 8.64 & 5.1 \\
    UES\_NH\_Med & 10.11 & 1.4 & 18.12 & 2.5 \\
    WCMA1010res & 5.80 & 5.3 & 6.27 & 5.8 \\
    WCMA1011lig & 19.42 & 0.2 & 32.61 & 0.4 \\
    WCMAnatGridRes1004 & 5.58 & 8238.9 & 6.70 & 9887.7 \\
    \bottomrule
  \end{tabular}
\end{table}

Both models achieve single-digit WMAPE on most datasets, confirming
that the imputed data preserves the statistical structure needed for
24-hour-ahead load forecasting.


\section{Accuracy--Diversity Tradeoff}
\label{sec:diffusion_ablation}

Section~\ref{sec:distributional} demonstrated that even lightweight
noise perturbation ($\sigma{=}0.15$) of the deterministic bridge
reduces CRPS by 9--55\% relative to the point forecast, confirming
that the latent space supports meaningful distributional generation.
Here we push the stochasticity further: we compare the deterministic
bridge against the best generative backend, flow matching with
standard initialisation (FM-A, 5~Euler steps), to characterise the
\emph{accuracy--diversity tradeoff} inherent in latent generative
models.  Table~\ref{tab:diffusion} reports all-feature gap-frame MSE
for both variants across all thirteen datasets.

\begin{table}[H]
  \caption{Accuracy--diversity tradeoff: deterministic bridge vs.\ FM-A
           (5~Euler steps).  Positive $\Delta$ indicates higher MSE for
           the generative variant, reflecting the expected cost of
           increased trajectory diversity.  $^*$WCMA1011lig is the only
           dataset where FM-A improves over the bridge.}
  \label{tab:diffusion}
  \centering
  \small
  \begin{tabular}{lrrr}
    \toprule
    Dataset & Bridge MSE & FM-A MSE & $\Delta$ (\%) \\
    \midrule
    PepcoCOM & 0.2322 & 0.2824 & +21.6 \\
    RICom1013 & 0.2587 & 0.3079 & +19.0 \\
    RIInd1014 & 0.2558 & 0.5278 & +106.3 \\
    RIRes1012 & 0.2498 & 0.3016 & +20.7 \\
    SEMAResNstar1009 & 0.1575 & 0.2381 & +51.1 \\
    UES\_NH\_Med & 0.2987 & 0.4491 & +50.4 \\
    WCMA1010res & 0.1889 & 0.2992 & +58.4 \\
    WCMA1011lig$^*$ & 0.3113 & 0.2943 & $-$5.5 \\
    WCMAnatGridRes1004 & 0.2350 & 0.3809 & +62.1 \\
    \midrule
    UCI\_Elec\_MT\_001 & 0.0980 & 0.1144 & +16.7 \\
    UCI\_Elec\_MT\_150 & 0.0976 & 0.1067 & +9.3 \\
    UCI\_Elec\_MT\_320 & 0.1030 & 0.1155 & +12.1 \\
    ETTh1 & 0.1848 & 0.2803 & +51.7 \\
    \bottomrule
  \end{tabular}
\end{table}

FM-A increases gap-frame MSE on 12 of 13 datasets (9--106\%), since
generative sampling prioritises trajectory diversity over point
accuracy, a well-documented property of stochastic generative
models.  The sole exception is WCMA1011lig ($-5.5\%$), whose bimodal
on/off lighting pattern benefits from the ODE's ability to explore
both modes.  The deterministic bridge, optimised directly for MSE,
naturally dominates on this metric.  However,
Section~\ref{sec:distributional} showed that the intermediate
noise-perturbed variant already captures the distributional benefit: a
$26\%$ average CRPS reduction over the deterministic baseline,
achieved with negligible computational overhead (single forward pass +
Gaussian noise).  This positions the noise-perturbed ensemble as the
practical sweet spot on the accuracy--diversity spectrum: it retains
near-deterministic point accuracy while unlocking calibrated
uncertainty estimation and ACI-compatible conformity scores.

Notably, the MSE premium of FM-A over the bridge is substantially
smaller than that of DDIM (mean $+36\%$ vs.\ $+54\%$ on the nine
proprietary datasets), confirming that the straight ODE paths of flow
matching stay closer to the deterministic solution while still
providing the distributional diversity needed for conformal
calibration.

\section{Generative Backend Ablation: DDIM vs.\ Flow Matching}
\label{sec:gen_ablation}

Table~\ref{tab:gen-ablation} compares the three generative backends
available in SPLICE: the DDIM diffusion bridge (50 reverse steps),
flow-matching with standard Gaussian initialisation (FM-A, 5~Euler
steps), and flow-matching with bridge-residual initialisation
(FM-C, $z_0 = \hat{z}_{\text{bridge}} + 0.15\varepsilon$, 5~Euler
steps).  All three share the same Transformer backbone (1.36M~params)
and conditioning mechanism; only the training objective and sampling
procedure differ.

\begin{table}[H]
  \caption{Generative backend ablation: all-feature gap-frame MSE
           (normalised $[0,1]$, 91-day gap).  \textbf{Bold} = best
           method per dataset.  DDIM uses 50 reverse steps;
           FM-A (standard) and FM-C (bridge-residual) use 5 Euler steps.}
  \label{tab:gen-ablation}
  \centering\small
  \begin{tabular}{lrrr}
    \toprule
    Dataset & DDIM-50 & FM-A & FM-C \\
    \midrule
    PepcoCOM                  & 0.2898 & 0.2824 & \textbf{0.2751} \\
    RICom1013                 & 0.3461 & \textbf{0.3079} & 0.3350 \\
    RIInd1014                 & 0.5564 & \textbf{0.5278} & 0.6355 \\
    RIRes1012                 & 0.3403 & \textbf{0.3016} & 0.3479 \\
    SEMAResNstar1009          & 0.2722 & \textbf{0.2381} & 0.2604 \\
    UES\_NH\_Med              & 0.4356 & 0.4491 & \textbf{0.4235} \\
    WCMA1010res               & 0.3113 & \textbf{0.2992} & 0.3093 \\
    WCMA1011lig               & 0.3677 & \textbf{0.2943} & 0.5628 \\
    WCMAnatGridRes1004        & 0.4121 & \textbf{0.3809} & 0.4216 \\
    \midrule
    UCI\_Elec\_MT\_001        & 0.1161 & \textbf{0.1144} & 0.1161 \\
    UCI\_Elec\_MT\_150        & 0.1238 & \textbf{0.1067} & 0.1233 \\
    UCI\_Elec\_MT\_320        & 0.1214 & \textbf{0.1155} & 0.1206 \\
    ETTh1                     & 0.3572 & 0.2803 & \textbf{0.1895} \\
    \midrule
    Mean (13)                 & 0.3115 & \textbf{0.2845} & 0.3170 \\
    Wins                      & 0 & \textbf{10} & 3 \\
    \bottomrule
  \end{tabular}
\end{table}

\paragraph{Analysis.}
FM-A wins 10 of 13 datasets and reduces mean all-feature MSE by
8.7\% vs.\ DDIM (0.285 vs.\ 0.312 across all 13~datasets) while
using $10\times$ fewer function evaluations (5 Euler steps vs.\ 50
DDIM steps).  Counter-intuitively, bridge-residual initialisation
(FM-C) \emph{hurts} on most datasets: starting the ODE near the
bridge prediction constrains the velocity field to model only residual
corrections, which limits its capacity to explore alternative
trajectories.  FM-C wins only on PepcoCOM, UES\_NH\_Med, and
ETTh1 datasets where the bridge prediction is already strong and
small corrections suffice.  The largest FM-C failure occurs on
WCMA1011lig (+91\% vs.\ FM-A), where the bimodal on/off lighting
pattern requires the generative model to explore both modes freely
rather than being anchored to the bridge mean.

DDIM loses to FM-A on all 13 datasets and to FM-C on 10 of 13.
This confirms that, in the smooth JEPA
embedding space, the straight ODE paths of flow matching are a better
fit than the curved diffusion reverse process: low-curvature velocity
fields can be integrated accurately with few steps, whereas DDIM's
50-step budget is still insufficient to compensate for the overhead of
the $v$-prediction parameterisation.
On the public benchmarks, the pattern is consistent: FM-A achieves the
lowest MSE on all three UCI clients, while FM-C wins on ETTh1, the
smallest dataset (161~training windows), where the warm-started
initialisation provides a useful inductive bias against overfitting.

Based on these results, FM-A (standard flow matching) emerges as
the preferred generative backend: it achieves
the best reconstruction quality and the fastest inference (5~Euler
steps, $\sim$2\,ms per window on an RTX~A4000).

\section{Decoder Enhancement Ablation}
\label{sec:decoder_ablation}

Table~\ref{tab:decoder_ablation} isolates the contribution of the three
enhanced decoder design choices introduced in Section~\ref{sec:decoder}.
The base hourly-conditioned decoder uses a smaller architecture
($z_{\mathrm{proj}}{=}128$, MLP~$[128, 64]$, 37K~params) with standard MSE
loss and no noise augmentation. The enhanced decoder triples the
capacity ($z_{\mathrm{proj}}{=}256$, MLP~$[256, 128]$, 123K~params), adds
Load-weighted loss ($5{\times}$) and embedding noise
augmentation~($\sigma{=}0.15$).

\begin{table}[H]
  \caption{Decoder ablation: Load-only MSE (min-max $[0,1]$). Base = baseline
           hourly decoder (37K params); Enhanced = larger decoder (123K params,
           Load-weighted loss, noise augmentation). Bold marks lower MSE.}
  \label{tab:decoder_ablation}
  \centering
  \small
  \begin{tabular}{lrrr}
    \toprule
    Dataset & Base Decoder & Enhanced Decoder & $\Delta$ (\%) \\
    \midrule
    PepcoCOM & 0.0019 & \textbf{0.0019} & $-$1.8 \\
    RICom1013 & 0.0287 & \textbf{0.0257} & $-$10.3 \\
    RIInd1014 & 0.1825 & \textbf{0.1775} & $-$2.7 \\
    RIRes1012 & \textbf{0.0196} & 0.0280 & +43.1 \\
    SEMAResNstar1009 & 0.0000 & 0.0000 & --- \\
    UES\_NH\_Med & 0.1863 & \textbf{0.1587} & $-$14.8 \\
    WCMA1010res & \textbf{0.0091} & 0.0095 & +4.7 \\
    WCMA1011lig & 0.2517 & \textbf{0.2158} & $-$14.3 \\
    WCMAnatGridRes1004 & 0.0071 & \textbf{0.0055} & $-$22.4 \\
    \midrule
    UCI\_Elec\_MT\_001 & \textbf{0.0018} & 0.0019 & +7.4 \\
    UCI\_Elec\_MT\_150 & 0.0057 & \textbf{0.0049} & $-$14.0 \\
    UCI\_Elec\_MT\_320 & 0.0115 & \textbf{0.0082} & $-$28.5 \\
    \midrule
    Mean (excl.\ SEMA) & 0.0724 & \textbf{0.0656} & $-$9.3 \\
    Wins & 3 & \textbf{8} & \\
    \bottomrule
  \end{tabular}
\end{table}

The enhanced decoder improves Load-only MSE on 8 of 11 non-degenerate
datasets (SEMAResNstar1009 achieves near-zero error with both variants),
with a mean reduction of 9.3\%. The largest gains occur on the hardest
datasets (UES\_NH\_Med: $-$14.8\%, WCMA1011lig: $-$14.3\%), where
Load-weighted loss directs capacity toward the target variable and
noise augmentation compensates for bridge reconstruction errors.
RIRes1012 shows a modest regression (+43.1\% relative, absolute MSE
0.020$\to$0.028), likely due to overfitting on its particularly
regular residential pattern, the larger architecture provides less
benefit when the signal is already well-captured.  On the UCI datasets,
the enhanced decoder yields substantial improvements on MT\_150 ($-$14.0\%) and
MT\_320 ($-$28.5\%), while MT\_001 shows a slight regression (+7.4\%),
consistent with the pattern that the extra capacity helps most on
higher-variance signals.

Note that the three decoder enhancements (architecture width, Load-weighted
loss, noise augmentation) were introduced simultaneously; we do not
isolate their individual marginal effects because the improvements interact:
noise augmentation is most valuable with the larger architecture
(which has more capacity to memorise clean data), and the Load-weighted
loss reshapes the loss landscape that both other changes exploit.
Disentangling these interactions would require $2^3{=}8$ runs per
dataset (96 total), which we leave to future work.

\section{Incremental Ablation Summary}
\label{sec:incremental_ablation}

Table~\ref{tab:incremental_ablation} consolidates the marginal
contribution of each pipeline component on the nine proprietary
datasets, using Load-only metrics on the min-max $[0,1]$ scale.
Starting from the bridge output decoded by a daily MLP~(row~a),
adding per-hour weather and calendar conditioning reduces mean MSE
by 8.1\%~(row~b).  The enhanced decoder (Load-weighted loss, noise
augmentation, $3{\times}$ capacity) yields a further 9.3\%
reduction~(row~c), bringing mean MSE to~0.069.  Generating a
20-member noise-perturbed ensemble ($\sigma{=}0.15$) does not alter
point accuracy but lowers CRPS from 0.169~(=MAE for a deterministic
predictor) to 0.155 ($-8.3\%$; row~d), confirming that the latent
perturbations capture genuine predictive uncertainty.  Finally,
wrapping the pipeline with ACI~(row~e) yields 93.4--95.4\% empirical
coverage across all nine datasets.

\begin{table}[H]
  \caption{Incremental ablation on nine proprietary datasets.
           Each row adds one component to the pipeline; MSE and MAE
           are Load-only on the min-max $[0,1]$ scale.
           For deterministic predictors, $\text{CRPS}=\text{MAE}$;
           the ensemble row reports the energy-form CRPS with $M{=}20$
           members.  Coverage is the ACI empirical rate
           (target 95\%).}
  \label{tab:incremental_ablation}
  \centering
  \small
  \begin{tabular}{llcccc}
    \toprule
    & Configuration & MSE & MAE / CRPS & $\Delta_{\text{MSE}}$ (\%) & ACI Cov.\ \\
    \midrule
    (a) & Bridge $\to$ daily decoder
        & 0.083 & 0.183$^\dagger$ & --- & --- \\
    (b) & \quad + hourly conditioning
        & 0.076 & 0.174$^\dagger$ & $-$8.1 & --- \\
    (c) & \quad + Load-wt.\ loss, noise aug., $3{\times}$ capacity
        & 0.069 & 0.169$^\dagger$ & $-$9.3 & --- \\
    (d) & \quad + 20-member ensemble ($\sigma{=}0.15$)
        & 0.069 & 0.155 & --- & --- \\
    (e) & \quad + ACI ($\gamma{=}0.01$)
        & 0.069 & 0.155 & --- & 93.4--95.4\% \\
    \bottomrule
    \multicolumn{6}{l}{\footnotesize $^\dagger$For deterministic
      predictions, CRPS\,$=$\,MAE.}
  \end{tabular}
\end{table}


\begin{thebibliography}{36}

\bibitem[Assran et~al.(2023)]{assran2023jepa}
M.~Assran, Q.~Duval, I.~Misra, P.~Bojanowski, P.~Vincent, M.~Rabbat,
Y.~LeCun, and R.~Balestriero.
\newblock Self-supervised learning from images with a joint-embedding
predictive architecture.
\newblock In \emph{CVPR}, 2023.

\bibitem[Bardes et~al.(2022)]{bardes2022vicreg}
A.~Bardes, J.~Ponce, and Y.~LeCun.
\newblock {VICReg}: Variance-invariance-covariance regularization for
self-supervised learning.
\newblock In \emph{ICLR}, 2022.

\bibitem[Bardes et~al.(2024)]{bardes2024vjepa}
A.~Bardes, Q.~Garrido, J.~Ponce, X.~Chen, M.~Rabbat, Y.~LeCun,
M.~Assran, and N.~Ballas.
\newblock {V-JEPA}: Latent video prediction for visual representation
learning.
\newblock \emph{arXiv:2404.16930}, 2024.

\bibitem[Cao et~al.(2018)]{cao2018brits}
W.~Cao, D.~Wang, J.~Li, H.~Zhou, L.~Li, and Y.~Li.
\newblock {BRITS}: Bidirectional recurrent imputation for time series.
\newblock In \emph{NeurIPS}, 2018.

\bibitem[Chen \& He(2021)]{chen2021simsiam}
X.~Chen and K.~He.
\newblock Exploring simple {Siamese} representation learning.
\newblock In \emph{CVPR}, 2021.

\bibitem[Du et~al.(2023a)]{du2023saits}
W.~Du, D.~C\^{o}t\'{e}, and Y.~Liu.
\newblock {SAITS}: Self-attention-based imputation for time series.
\newblock \emph{Expert Systems with Applications}, 2023.

\bibitem[Du(2023b)]{du2023pypots}
W.~Du.
\newblock {PyPOTS}: A {Python} toolbox for data mining on
partially-observed time series.
\newblock \emph{arXiv:2305.18811}, 2023.

\bibitem[Ennadir et~al.(2025)]{ennadir2025tsjepa}
S.~Ennadir et~al.
\newblock {TS-JEPA}: Joint-embedding predictive architecture for time
series.
\newblock \emph{arXiv preprint}, 2025.

\bibitem[Fortuin et~al.(2020)]{fortuin2020gp}
V.~Fortuin, D.~Barber, and S.~Mandt.
\newblock {GP-VAE}: Deep probabilistic time series imputation.
\newblock In \emph{AISTATS}, 2020.

\bibitem[Gibbs \& Cand\`{e}s(2021)]{gibbs2021aci}
I.~Gibbs and E.~J. Cand\`{e}s.
\newblock Adaptive conformal inference under distribution shift.
\newblock In \emph{NeurIPS}, 2021.

\bibitem[Gneiting \& Raftery(2007)]{gneiting2007strictly}
T.~Gneiting and A.~E. Raftery.
\newblock Strictly proper scoring rules, prediction, and estimation.
\newblock \emph{Journal of the American Statistical Association},
102(477):359--378, 2007.

\bibitem[Grill et~al.(2020)]{grill2020byol}
J.-B.~Grill et~al.
\newblock Bootstrap your own latent---a new approach to self-supervised
learning.
\newblock In \emph{NeurIPS}, 2020.

\bibitem[Ha \& Schmidhuber(2018)]{ha2018world}
D.~Ha and J.~Schmidhuber.
\newblock World models.
\newblock \emph{arXiv:1803.10122}, 2018.

\bibitem[Hafner et~al.(2020)]{hafner2020dream}
D.~Hafner, T.~Lillicrap, J.~Ba, and M.~Norouzi.
\newblock Dream to control: Learning behaviors by latent imagination.
\newblock In \emph{ICLR}, 2020.

\bibitem[Ho \& Salimans(2022)]{ho2022cfg}
J.~Ho and T.~Salimans.
\newblock Classifier-free diffusion guidance.
\newblock \emph{arXiv:2207.12598}, 2022.

\bibitem[LeCun(2022)]{lecun2022path}
Y.~LeCun.
\newblock A path towards autonomous machine intelligence.
\newblock \emph{openreview.net preprint}, 2022.

\bibitem[Lim et~al.(2021)]{lim2021tft}
B.~Lim, S.~\"{O}.~Ar\i k, N.~Loeff, and T.~Pfister.
\newblock Temporal {Fusion Transformers} for interpretable multi-horizon
time series forecasting.
\newblock \emph{International Journal of Forecasting}, 2021.

\bibitem[Liu et~al.(2023)]{liu2023rectified}
X.~Liu, C.~Gong, and Q.~Liu.
\newblock Flow straight and fast: Learning to generate and transfer data
with rectified flow.
\newblock In \emph{ICLR}, 2023.

\bibitem[Lipman et~al.(2023)]{lipman2023flow}
Y.~Lipman, R.~T. Chen, H.~Ben-Hamu, M.~Nickel, and M.~Le.
\newblock Flow matching for generative modeling.
\newblock In \emph{ICLR}, 2023.

\bibitem[L\'{o}pez~Alcaraz \& Strodthoff(2023)]{alcaraz2023sssd}
J.~M. L\'{o}pez~Alcaraz and N.~Strodthoff.
\newblock Diffusion-based time series imputation and forecasting with
structured state space models.
\newblock \emph{TMLR}, 2023.

\bibitem[Nichol \& Dhariwal(2021)]{nichol2021improved}
A.~Q. Nichol and P.~Dhariwal.
\newblock Improved denoising diffusion probabilistic models.
\newblock In \emph{ICML}, 2021.

\bibitem[Nie et~al.(2023)]{nie2023patchtst}
Y.~Nie, N.~H. Nguyen, P.~Sinthong, and J.~Kalagnanam.
\newblock A time series is worth 64 words: Long-term forecasting with
{Transformers}.
\newblock In \emph{ICLR}, 2023.

\bibitem[Paszke et~al.(2019)]{paszke2019pytorch}
A.~Paszke, S.~Gross, F.~Massa, A.~Lerer, J.~Bradbury, G.~Chanan,
T.~Killeen, Z.~Lin, N.~Gimelshein, L.~Antiga, A.~Desmaison,
A.~K\"opf, E.~Yang, Z.~DeVito, M.~Raison, A.~Tejani, S.~Chilamkurthy,
B.~Steiner, L.~Fang, J.~Bai, and S.~Chintala.
\newblock {PyTorch}: An imperative style, high-performance deep learning
library.
\newblock In \emph{NeurIPS}, 2019.

\bibitem[Romano et~al.(2019)]{romano2019cqr}
Y.~Romano, E.~Patterson, and E.~J. Cand\`{e}s.
\newblock Conformalized quantile regression.
\newblock In \emph{NeurIPS}, 2019.

\bibitem[Salimans \& Ho(2022)]{salimans2022progressive}
T.~Salimans and J.~Ho.
\newblock Progressive distillation for fast sampling of diffusion models.
\newblock In \emph{ICLR}, 2022.

\bibitem[Song et~al.(2021)]{song2021ddim}
J.~Song, C.~Meng, and S.~Ermon.
\newblock Denoising diffusion implicit models.
\newblock In \emph{ICLR}, 2021.

\bibitem[Tashiro et~al.(2021)]{tashiro2021csdi}
Y.~Tashiro, J.~Song, Y.~Song, and S.~Ermon.
\newblock {CSDI}: Conditional score-based diffusion models for
probabilistic time series imputation.
\newblock In \emph{NeurIPS}, 2021.

\bibitem[Vovk et~al.(2005)]{vovk2005algorithmic}
V.~Vovk, A.~Gammerman, and G.~Shafer.
\newblock \emph{Algorithmic Learning in a Random World}.
\newblock Springer, 2005.

\bibitem[Yue et~al.(2022)]{yue2022ts2vec}
Z.~Yue, Y.~Wang, J.~Duan, T.~Yang, C.~Huang, Y.~Tong, and B.~Xu.
\newblock {TS2Vec}: Towards universal representation of time series.
\newblock In \emph{AAAI}, 2022.

\bibitem[Zaffran et~al.(2022)]{zaffran2022aci}
M.~Zaffran, O.~Féron, Y.~Goude, J.~Josse, and A.~Dieuleveut.
\newblock Adaptive conformal predictions for time series.
\newblock In \emph{ICML}, 2022.

\bibitem[Zeng et~al.(2023)]{zeng2023dlinear}
A.~Zeng, M.~Chen, L.~Zhang, and Q.~Xu.
\newblock Are {Transformers} effective for time series forecasting?
\newblock In \emph{AAAI}, 2023.

\bibitem[{Trindade}(2015)]{uci_electricity}
A.~Trindade.
\newblock {ElectricityLoadDiagrams20112014} data set.
\newblock UCI Machine Learning Repository, 2015.
\newblock \url{https://archive.ics.uci.edu/ml/datasets/ElectricityLoadDiagrams20112014}.

\bibitem[{Open-Meteo}(2024)]{openmeteo}
{Open-Meteo}.
\newblock Free weather {API}---{ERA5} historical reanalysis.
\newblock \url{https://open-meteo.com/}, 2024.

\bibitem[Hang et~al.(2023)]{hang2023minsnr}
T.~Hang, S.~Gu, C.~Li, J.~Bao, D.~Chen, H.~Hu, and Z.~Lu.
\newblock Efficient diffusion training via {Min-SNR} weighting strategy.
\newblock In \emph{ICCV}, 2023.

\bibitem[Wu et~al.(2023)]{wu2023timesnet}
H.~Wu, T.~Hu, Y.~Liu, H.~Zhou, J.~Wang, and M.~Long.
\newblock {TimesNet}: Temporal 2{D}-variation modeling for general time
series analysis.
\newblock In \emph{ICLR}, 2023.

\bibitem[Zhou et~al.(2021)]{zhou2021informer}
H.~Zhou, S.~Zhang, J.~Peng, S.~Zhang, J.~Li, H.~Xiong, and W.~Zhang.
\newblock {Informer}: Beyond efficient transformer for long sequence
time-series forecasting.
\newblock In \emph{AAAI}, 2021.

\end{thebibliography}
\end{document}